\newtheorem{theorem}{Theorem}
\definecolor{lightbrown}{RGB}{166,103,56}
\begin{document}
\title{A Three-Stage Offline SDRE-Based Control Framework for Human Motion Reproduction on a Suspended Bipedal Robot}

\author{Ping-Kong Huang,
Chien-Wu Lan,
Ching-Kai Lin,
and Chin-Tien Wu
\thanks{This work was supported by the National Science and Technology Council (NSTC), Taiwan, under Grant No.~114-2115-M-A49-017-MY2. (Corresponding author: Chin-Tien Wu.)}
\thanks{Ping-Kong Huang, Ching-Kai Lin, and Chin-Tien Wu are with the Department of Applied Mathematics, National Yang Ming Chiao Tung University, Hsinchu 30010, Taiwan (e-mail: s0752307.sc07@nycu.edu.tw; magicjacky0130.sc11@nycu.edu.tw; ctw@math.nctu.edu.tw).}
\thanks{Chien-Wu Lan is with the Department of Electrical Engineering, National Central University, Taoyuan City 320317, Taiwan (e-mail: chienwulan@g.ncu.edu.tw).}
}

\maketitle

{\small
\noindent
This work has been submitted to the IEEE for possible publication.
Copyright may be transferred without notice, after which this version
may no longer be accessible.
\par
}
\vspace{0.5em}

\begin{abstract}
Evaluating lower limb exoskeletons directly with human subjects can expose users to risk when actuator faults, joint misalignment, or unsuitable assistance occur. Therefore, captured human motion must first be converted into commands that are executable by the robot hardware and repeatable across trials. This paper presents a three-stage offline command generation framework for reproducing lower limb motion and  torque on a suspended bipedal robot platform used as a robotic bench system for exoskeleton evaluation. First, State-Dependent Riccati Equation control is applied to the robot dynamic model to obtain a reference torque trajectory associated with measured lower limb motion. Second, parameterized optimization converts this reference into trapezoidal joint velocity commands subject to motor speed and acceleration limits. Third, a  proportional-integral-derivative linear quadratic regulator (PID-LQR) compensation refines the command profiles using experimental tracking data. Walking and squatting motions recorded by a Vicon motion capture system are reproduced on the suspended robot to evaluate tracking accuracy and repeatability. The results show that the average root mean square error (RMSE) and standard deviation (STD) of joint angles across repeated trials remain below $\bm{3^\circ}$ and $\bm{0.15^\circ}$, respectively. Comparisons of joint angles and torques further show that the proposed method achieves lower maximum RMSE and STD values than the two baseline controllers in all reported cases. These results indicate that the proposed three-stage control provides repeatable and actuator-feasible motion reproduction on a suspended bipedal robot platform as a preliminary test environment for lower-limb exoskeleton research before tests involving human subjects.
\end{abstract}

\begin{IEEEkeywords}
Motion reproduction, offline command refinement, parameterized command representation, state-dependent Riccati equation control, suspended bipedal robot.
\end{IEEEkeywords}

\section{Introduction}

\IEEEPARstart{L}{ower-limb} exoskeleton robots are designed to assist lower limb motion by using wearer movements as control inputs.
Although these systems are relevant to industrial and rehabilitation applications, early evaluation still raises safety and reliability concerns.
Mechanical malfunction, inconsistent motion replication, joint misalignment, and constrained knee mechanisms can expose the wearer to injury, discomfort, or parasitic loads~\cite{He:2017,Sarkisian:2021,Hong:2023}.
This risk becomes more critical when experiments involve users with physical disabilities or limited tolerance for exploratory tests, since rehabilitation studies with lower limb exoskeleton assistance can remain preliminary and limited by small subject cohorts~\cite{Dai:2025}.
These concerns indicate that exoskeleton evaluation requires controlled and repeatable test procedures before extensive human subject experiments are conducted.

To reduce these risks, robotic test platforms separate the early evaluation process from direct human participation.
Several studies have developed different testing methods for this purpose.
These include dedicated and controllable devices for measuring exoskeleton torque~\cite{Hartmann:2021}, artificial legs or lower-limb simulators that replace human subjects during testing~\cite{Gao:2021,Massardi:2023}, and humanoid robot frameworks that combine simulation and human motion modeling for system evaluation~\cite{Wehrle:2022}.
Besides improving safety during early-stage testing, these platforms also provide measurable and repeatable motion conditions~\cite{Young:2017}, which are difficult to achieve in experiments with human subjects.

Robotic test platforms are generally designed to reproduce the human movements obtained from motion capture systems, making accurate trajectory replication a critical requirement~\cite{Young:2017}.
However, this conversion is not straightforward because captured motion trajectories provide kinematic references, whereas robot execution also depends on dynamic modeling, command generation, and actuator constraints.
Consequently, this conversion must account for platform characteristics, including the mechanical structure, system mass, actuator properties, and motion constraints imposed by the mechanism~\cite{Andrade:2021,Schrade:2021}.
Since these factors influence the physical behavior of the platform, system dynamics must be considered before the recorded trajectories can be converted into executable robot commands~\cite{Buschmann:2007,Abdullah:2022}. 
Various nonlinear and dynamic model-based control frameworks have been used to manage tracking discrepancies on robotic test platforms ~\cite{Islam:2022,Sun:2021,ElHussieny:2024}. Additionally, methods derived from linear quadratic control (LQR) have been applied to stabilize a Pendubot ~\cite{Pazderski:2022} and control locomotion of a biped robot~\cite{Shafei:2025}.\\

It is well-known that the State Dependent Riccati Equation (SDRE) approach represents nonlinear dynamics in a state dependent coefficient form and obtains a nonlinear control feedback law through state dependent Riccati equations~\cite{Alla:2023}.
In this study, the SDRE formulation is utilized to compute sub-optimal control inputs along the reference path, generating the baseline torque profiles required to describe the dynamic demands of the captured lower-limb motion.
The resulting torque trajectory therefore specifies the dynamic demand required to reproduce the measured lower-limb motion on the robot platform. However, in general, these theoretical sub-optimal torques cannot be generated by motor actuators precisely due to physical limitations of the actuators, including torque saturation and operating constraints ~\cite{Ghoreishi:2022}. \\
To bridge this gap, parameterized trajectory generation, specifically utilizing trapezoidal velocity or acceleration profiles of motor actuators, offers a practical means to constrain commands within safe actuator velocity and acceleration boundaries. Trapezoidal velocity profiles and optimized acceleration profiles have been shown to be used to generate robot trajectories and motion gait reproduction, respectively, while satisfying kinematic and dynamic constraints~\cite{Chettibi:2006,Cusimano:2022,Khan:2022}. Trajectory tracking under joint state restrictions has been investigated for a suspended bipedal robot~\cite{Rincon:2022}. This prior work supports the use of the suspended configuration for controlled joint motion, while the present study uses this configuration to isolate joint trajectory reproduction before ground interaction and complex exoskeleton coupling are introduced. 
  
In this study, we take a similar approach introduced in ~\cite{Rincon:2022} to convert the SDRE based torque reference into executable velocity and acceleration command sequences while preserving the torque demand required for motion reproduction on a suspended bipedal robot. We address the system-level command generation problem on a suspended bipedal robot platform developed by Lan et al.~\cite{Lan:2021a,Lan:2021b}. To drive this platform with realistic human motion, biological gait data are explicitly captured via the Vicon motion-capture system in this study. Since this optical tracking system only provides joint kinematic trajectories without information from the continuous ground reaction force (GRF), correspondingly, the suspended bipedal robot is also completely free from the influence of GRF. By decoupling these contact dynamics from the control loop, our framework avoids the unpredictable feedback associated with ground impacts.
As a result, the framework focuses on reproducing joint angle trajectories that are compatible with the platform's four actuated degrees of freedom and motor constraints, while establishing repeatable experimental conditions before interaction force modeling is incorporated in future hardware integrations.
Consequently, the objective of this paper is not to directly evaluate the interaction of the human exoskeleton.
Instead, the objective is to develop a control method that enables the suspended bipedal platform to provide this repeatable trajectory baseline for later evaluation of the exoskeleton.

To achieve this objective, this paper proposes a three-stage offline control architecture for feasible motion reproduction by actuators.
In the first stage, the SDRE control method is applied to the dynamics of the robot to generate the joint torque reference required to reproduce the captured gait.
This stage provides a model-based representation of the torque demand associated with the desired lower limb motion.
In the second stage, parameterized optimization converts the torque reference into trapezoidal angular velocity commands that satisfy the constraints of the speed and acceleration of the motor.
This stage maps the model based reference into the actuator command space of the suspended robot.
In the third stage, a PID-LQR acceleration compensation scheme, following the reformulation approaches proposed by O'Brien and Howe~\cite{OBrien:2008} and He, Wang, and Lee~\cite{He:2000}, refines the command profiles using experimental feedback.
This offline formulation is adopted because real time feedback from joint sensors or external measurement systems can be affected by latency, noise, and synchronization error.
If these uncertain measurements are used to update the command during each trial, the resulting motion may vary between repetitions.
This variability is undesirable for a standardized test bench, where the executed motion should remain consistent across trials.

The experimental evaluation compares the proposed framework with baseline controllers,  including the model predictive control (MPC)~\cite{Chen:2018} and PID control tuned using improved particle swarm optimization (IPSO-PID)~\cite{Liu:2021},  under identical motion reproduction conditions.

The main contributions of this study are summarized as follows:
\begin{itemize}
    \item A suspended bipedal robot system is formulated as a surrogate motion reproduction platform that converts Vicon captured lower limb motion into repeatable robot joint motion for walking and squatting tasks.
    \item A three-stage offline command generation architecture is developed by combining SDRE torque reference generation, parameterized motor command optimization, and PID-LQR acceleration compensation under actuator constraints.
    \item 
   In repeated motion reproduction experiments, the proposed three-stage control method reduces the maximum RMSE and STD of the joint angles by at least 20.6\% and 69.1\%, respectively, compared to baseline controllers. Furthermore, the maximum RMSE and STD of model-based joint torques are reduced by at least 11.3\% and 65.9\%, respectively, demonstrating superior repeatability from trial-to-trial.
\end{itemize}

The remainder of this paper is organized as follows.
The experimental platform and dynamic model are presented in Section II.
The control methodologies, including SDRE control, parameterized optimization, and PID-LQR compensation, are introduced in Section III.
The experimental setup and results are presented in Section IV, including comparisons with the MPC and IPSO-PID baselines.
Finally, conclusions are drawn in Section V.

\section{Dynamic Modeling}
The bipedal robot platform used in this study is shown in Fig.~\ref{fig:model}(a). The robot consists of two legs, each with two actuated joints (hip and knee), providing a total of four degrees of freedom (DoF) in the sagittal plane. Since knee-type exoskeletons typically employ a single actuator per joint, each joint of the bipedal robot is also designed with one degree of freedom to enable one-to-one matching with actuator characteristics. This simple yet functional setup allows the platform to serve as a controlled motion reproduction system for preliminary studies, thus reducing the physical burden and safety risks associated with early-stage system validation. The bipedal robot can be represented by the lower limb model shown in Fig.~\ref{fig:model}(b). To further reduce the complexity of the model, several assumptions are applied to our bipedal robot. 
First, joint movement is constrained to the sagittal plane. 
Second, the centers of mass for the thigh and calf are assumed to be located at the midpoints of their respective segments between the hip-knee and knee-ankle joints. 
Third, the dynamics of both legs are considered identical and independent of one another. Finally, nonlinear dissipative forces-including mechanical friction within motor gearing, joint bearings, and cable drag-are neglected. 
Under this simplified framework, the system dynamics become more manageable, allowing the analysis to focus on the primary driving forces: gravity and motor torques. 
The primary dynamics of this simplified model of the bipedal robot can be analyzed through the dynamic of the single-leg model. 
As a result, control laws can be computed efficiently and stability can be rigorously analyzed. 
Although these omissions may introduce minor deviations, such as drift, in experimental settings, the resulting model retains the essential dynamic characteristics of the real bipedal robot necessary for efficient control development.

The simplified model of our suspended bipedal robot is introduced below. Let \(P_1\), \(P_2\) denote the hip, knee joints, respectively. Let \(l_1\) and \(l_2\) represent the lengths of the thigh and calf \(P_{c1}\) and let \(P_{c2}\) denote the centers of mass of the thigh and calf which have corresponding masses \(m_{c1}\) and \(m_{c2}\),  respectively. Finally, let \(m_1\) and \(m_2\) represent the masses of the servomotors at the hip and knee, respectively. Let \(\theta_1, \theta_2\) denote the angles of the thigh and calf with respect to the vertical (perpendicular to the ground) and \(\tau_1, \tau_2\) denote the torques generated by the servomotors at the hip and knee, respectively. For clarity, these notations are listed in Table~\ref{tab:table1}.
\begin{table}[h!]
    \caption{Description of the Parameters in the Dynamic Model of the Left Lower Limb}
    \label{tab:table1}
    \centering
    \begin{tabular}{c|l}
        \hline
        \textbf{Symbol} & \textbf{Description} \\
        \hline
        $P_1$ & Hip joint\\
        $P_2$ & Knee joint \\
        $P_{c1}$ & Center of mass of thigh\\
        $P_{c2}$ & Center of mass of lower leg\\
        $l_1$ & Thigh length\\
        $l_2$ & Lower leg length\\
        $m_1$ & Mass of hip servo motor \\
        $m_2$ & Mass of knee servo motor\\
        $m_{c1}$ & Mass of thigh\\
        $m_{c2}$ & Mass of lower leg\\
        $\theta_1$ & Hip joint angle\\
        $\theta_2$ & Knee joint angle\\
        \hline
    \end{tabular}
\end{table}
\begin{figure}[htbp]
  \centering
  \subfloat[\label{fig:model_a}]{%
    \includegraphics[width=0.22\textwidth]{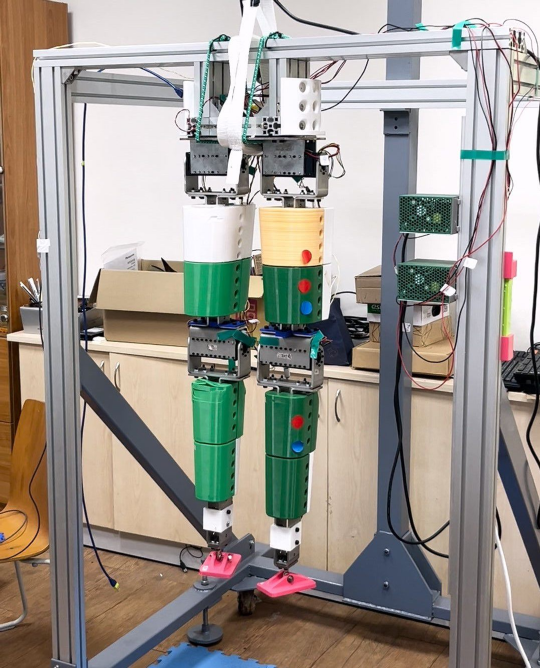}%
  }\hfill
  \subfloat[\label{fig:model_b}]{%
    \includegraphics[width=0.22\textwidth]{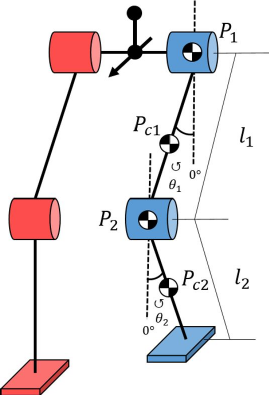}%
  }
  \caption{(a) Overall look of the bipedal robot. (b) Dynamic model of the bipedal robot, where the symbols are defined in Table~\ref{tab:table1}.}
  \label{fig:model}
\end{figure}
The dynamics of our single-leg model can be derived from  the Euler-Lagrange equation:
\begin{equation}
\frac{d}{dt} \left( \frac{\partial L}{\partial \dot{\theta}} \right) - \frac{\partial L}{\partial \theta} = \tau(t)
\end{equation}
where \(L=T-V\) is the Lagrangian of the kinetic energy \(T\) and the potential energy \(V\) of the system. 
 The kinetic energy \(T\) is expressed as:
\begin{align}
T &= \frac{1}{8} m_{c1} l_1^2 \dot{\theta}_1^2 + \frac{1}{2} I_{c1} \dot{\theta}_1^2 + \frac{1}{2} m_2 l_1^2 \dot{\theta}_1^2\notag \\& +\frac{1}{2} m_{c2} \left[ l_1^2 \dot{\theta}_1^2 + \frac{1}{4} l_2^2 \dot{\theta}_2^2 + l_1 l_2 \dot{\theta}_1 \dot{\theta}_2 \cos(\theta_1 - \theta_2) \right] + \frac{1}{2} I_{c2} \dot{\theta}_2^2
\end{align}
where $I_{c1}\text{ and }I_{c2}$ are inertia moments
\begin{equation}
    \begin{aligned}
        I_{c1}&=\frac{1}{12}m_{c1}l_1^2\\
        I_{c2}&=\frac{1}{12}m_{c2}l_2^2
    \end{aligned}
\end{equation}
and the potential energy \(V\) is simply computed by:
\begin{equation}
    \begin{aligned}
        V = -m_2 g l_1 \cos \theta_1 - \frac{1}{2} m_{c1} g l_1 \cos \theta_1 \\
        - m_{c2} g l_1 \cos \theta_1 - \frac{1}{2} m_{c2} g l_2 \cos \theta_2
    \end{aligned}
\end{equation}
where $g$ is the gravitational constant. \\
Substituting these into the Euler-Lagrange equation gives us the equations of motion and the equations can be 
expressed in following matrix form:
\begin{equation}\label{eq:dynamics_state}
    \bm{\tau = M(\theta)\ddot{\theta}+V(\theta,\dot{\theta})\dot{\theta}+G_{sd}(\theta)\theta}
\end{equation}
where the matrices \(\bm{M(\theta)}\), \(\bm{G_{sd}(\theta)}\), and \(\bm{V(\theta, \dot{\theta})}\) are defined as follows:
\begin{equation}\label{eq:matrix_M}
\begin{aligned}
&\bm{M(\theta)} =\\
&\begin{bmatrix} \frac{1}{4}m_{c1}l_1^2 + I_{c1} + m_2l_1^2+m_{c2}l_1^2 & \frac{1}{2}m_{c2}l_1l_2\cos(\theta_1 - \theta_2) \\ \frac{1}{2}m_{c2}l_1l_2\cos(\theta_1 - \theta_2) & \frac{1}{4}m_{c2}l_2^2 + I_{c2} \end{bmatrix}
\end{aligned}
\end{equation}
\begin{equation}\label{eq:matrix_V}
\begin{aligned}
&\bm{V(\theta, \dot{\theta})} =\\ &\begin{bmatrix} 0 & \frac{1}{2}m_{c2}l_1l_2\sin(\theta_1 - \theta_2)\dot{\theta}_2 \\ -\frac{1}{2}m_{c2}l_1l_2\sin(\theta_1 - \theta_2)\dot{\theta}_1 & 0 \end{bmatrix}   
\end{aligned}
\end{equation}
\begin{equation}\label{eq:matrix_Gsd}
\begin{aligned}
&\bm{G_{sd}(\theta)} =\\ 
&\begin{bmatrix} (m_2g l_1 + \frac{1}{2}m_{c1}g l_1+m_{c2}g l_1) \frac{\sin\theta_1}{\theta_1} & 0 \\ 0 & \frac{1}{2}m_{c2}g l_2 \frac{\sin\theta_2}{\theta_2} \end{bmatrix}
\end{aligned}
\end{equation}
and the vectors $\ddot{\bm{\theta}}=[\ddot{\theta_1},\ddot{\theta_2}]^T$, $\dot{\bm{\theta}}=[\dot{\theta_1},\dot{\theta_2}]^T$, and $\bm{\theta}=[\theta_1,\theta_2]^T$ in $\mathbb{R}^2$ 
denote the angular acceleration, angular velocity, and angles of the joints in the single leg of the bipedal robot, respectively. The fraction term \(\frac{\sin\theta}{\theta}\) in the matrix \(\bm{G_{sd}(\theta)}\) is set to 1 as \(\theta\) tends to 0 to avoid numerical singularity.

To facilitate control and analysis of the bipedal robot system, consider a desired motion profile comprising the desired angles, angular velocities, and accelerations of motor joints. This profile must also satisfy the dynamic equation:
\begin{equation}\label{eq:dynamics_desired}
    \bm{\tau}^d = \bm{M}(\bm{\theta}^d)\ddot{\bm{\theta}}^d + \bm{V}(\bm{\theta}^d, \dot{\bm{\theta}}^d)\dot{\bm{\theta}}^d + \bm{G}_{sd}(\bm{\theta}^d)\bm{\theta}^d
\end{equation}
where $\bm{\tau}^d$ is the desired torque of the motor joints associated with the desired motion profile. 
By subtracting \eqref{eq:dynamics_desired} from \eqref{eq:dynamics_state}, the error dynamics can be obtained as:
\begin{equation}\label{eq:dynamics_err1}
\begin{aligned}
    \bm{\tau} - \bm{\tau}^d &= \bm{M}(\bm{\theta})\ddot{\bm{\theta}} - \bm{M}(\bm{\theta}^d)\ddot{\bm{\theta}}^d + \bm{V}(\bm{\theta}, \dot{\bm{\theta}})\dot{\bm{\theta}} - \bm{V}(\bm{\theta}^d, \dot{\bm{\theta}}^d)\dot{\bm{\theta}}^d\\
    &+ \bm{G}_{sd}(\bm{\theta})\bm{\theta} - \bm{G}_{sd}(\bm{\theta}^d)\bm{\theta}^d
\end{aligned}
\end{equation}
The Eq. \eqref{eq:dynamics_err1} can be rewritten as following
\begin{equation}\label{eq:dynamics_err2}
\begin{aligned}
   \bm{\tau} - \bm{\tau}^d &= \bm{M}(\bm{\theta})(\ddot{\bm{\theta}} - \ddot{\bm{\theta}}^d) + \bm{V}(\bm{\theta}, \dot{\bm{\theta}})(\dot{\bm{\theta}} - \dot{\bm{\theta}}^d) \\
   +& \bm{G}_{sd}(\bm{\theta})(\bm{\theta} - \bm{\theta}^d) + \bm{g}_f(\bm{\theta}, \bm{\theta}^d, \dot{\bm{\theta}}, \dot{\bm{\theta}}^d, \ddot{\bm{\theta}}, \ddot{\bm{\theta}}^d)    
\end{aligned}
\end{equation}
where
\begin{equation}
\begin{aligned}
    \bm{g}_f &= (\bm{M}(\bm{\theta}) - \bm{M}(\bm{\theta}^d))\ddot{\bm{\theta}}^d \\
    &+ (\bm{V}(\bm{\theta}, \dot{\bm{\theta}}) - \bm{V}(\bm{\theta}^d, \dot{\bm{\theta}}^d))\dot{\bm{\theta}}^d \\
    &+ (\bm{G}_{sd}(\bm{\theta}) - \bm{G}_{sd}(\bm{\theta}^d))\bm{\theta}^d
\end{aligned}
\end{equation}
Let $\bm{x}=[\bm{\theta}-\bm{\theta}^d,\bm{\dot{\theta}}-\bm{\dot{\theta}}^d,\zeta]^T$ denote the state error where $\zeta$ is an augmented variable to handle the sourcing term $\bm{g_f}$. The dynamics of the error system Eq. \eqref{eq:dynamics_err2} can now be written in the standard form of as follows for SDRE:
\begin{equation}\label{eq:dynamics}
\bm{\dot{x} = A(x)x + B(x)u}
\end{equation}
with observer
\begin{equation}\label{eq:dynamics_observer}
\bm{y=Cx}
\end{equation}
where $\bm{u=\tau-\tau_d}$ is the control vector.

\noindent Therefore, the state dependent matrices $\bm{A(x)}$ and $\bm{B(x)}$ can be defined as \eqref{eq:dynamics_A},\eqref{eq:dynamics_B}:
\begin{equation}\label{eq:dynamics_A}
\bm{A}(\bm{x}) = \begin{bmatrix}
\bm{0} & \bm{I}_2 & \bm{0} \\
-\bm{M}^{-1}\bm{G}_{sd} & -\bm{M}^{-1}\bm{V} & -\bm{M}^{-1}(\frac{\bm{g}_f}{\zeta}) \\
\bm{0} & \bm{0} & -\eta
\end{bmatrix}\in \mathbb{R}^{5\times5}
\end{equation}
and
\begin{equation}\label{eq:dynamics_B}
\bm{B}(\bm{x}) = \begin{bmatrix} \bm{0} \\ \bm{M}^{-1} \\ \bm{0} \end{bmatrix} \in \mathbb{R}^{5\times2},
\end{equation}
and, for simplicity, $\bm{C}=\bm{I}_5 \in \mathbb{R}^{5\times5}$ is considered the identity matrix. 
The parameter $\eta$ in the matrix $A$ is a tunable value greater than 0 and is typically chosen to be much smaller than 1. Once the control vector $\mathbf{u}$ is obtained, the corresponding torque input $\boldsymbol{\tau}$ for the servomotors can be calculated from $\boldsymbol{\tau}^d + \mathbf{u}$.\\
\noindent A dynamic model that characterizes the relationship torque between joint torques and joint states of each leg is required to support both the SDRE control design and the torque reference used in the parameterized optimization. As the bipedal robot’s legs are synchronously actuated via a unified command sequence, the torques derived from the individual leg dynamics are concatenated and integrated into an optimization framework to extract the required motor commands. This optimal sequence is designed to deliver the reference torques established during the SDRE control phase. While this mapping would be exact in an idealized system, a PID controller is implemented to mitigate control losses and track errors arising from nonlinear dissipative forces. This hierarchical structure constitutes the three-stage cascade control algorithm detailed in the following section. 

\section{Control Methodology}
To effectively control the bipedal platform, we propose a three-stage control strategy: 
(i) SDRE Control: Reference torques are generated by applying SDRE control to the single-leg dynamics to approximate the torque profiles associated with the desired human lower-limb gaits recorded by a Vicon motion-capture system 
(ii) Parameterized Optimization: Using these reference torques as a target cost, an actuator-constrained optimization-incorporating command variable limits and motor characteristics is performed to derive an optimal motor command sequence and 
(iii) Acceleration Compensation: An acceleration-based compensation term is computed via a PID-LQR controller to mitigate deviations between the desired gaits and the experimental output. This framework utilizes an offline command generation and refinement process to ensure the consistent execution of predefined motion trajectories. The overall procedure of the proposed three-stage control strategy is illustrated in Fig.~\ref{fig:BlockDiagram}. 
The detailed formulation and implementation of each stage are presented in the following sections.

\begin{figure}[htbp]
    \centering
    \includegraphics[width=0.85\linewidth]{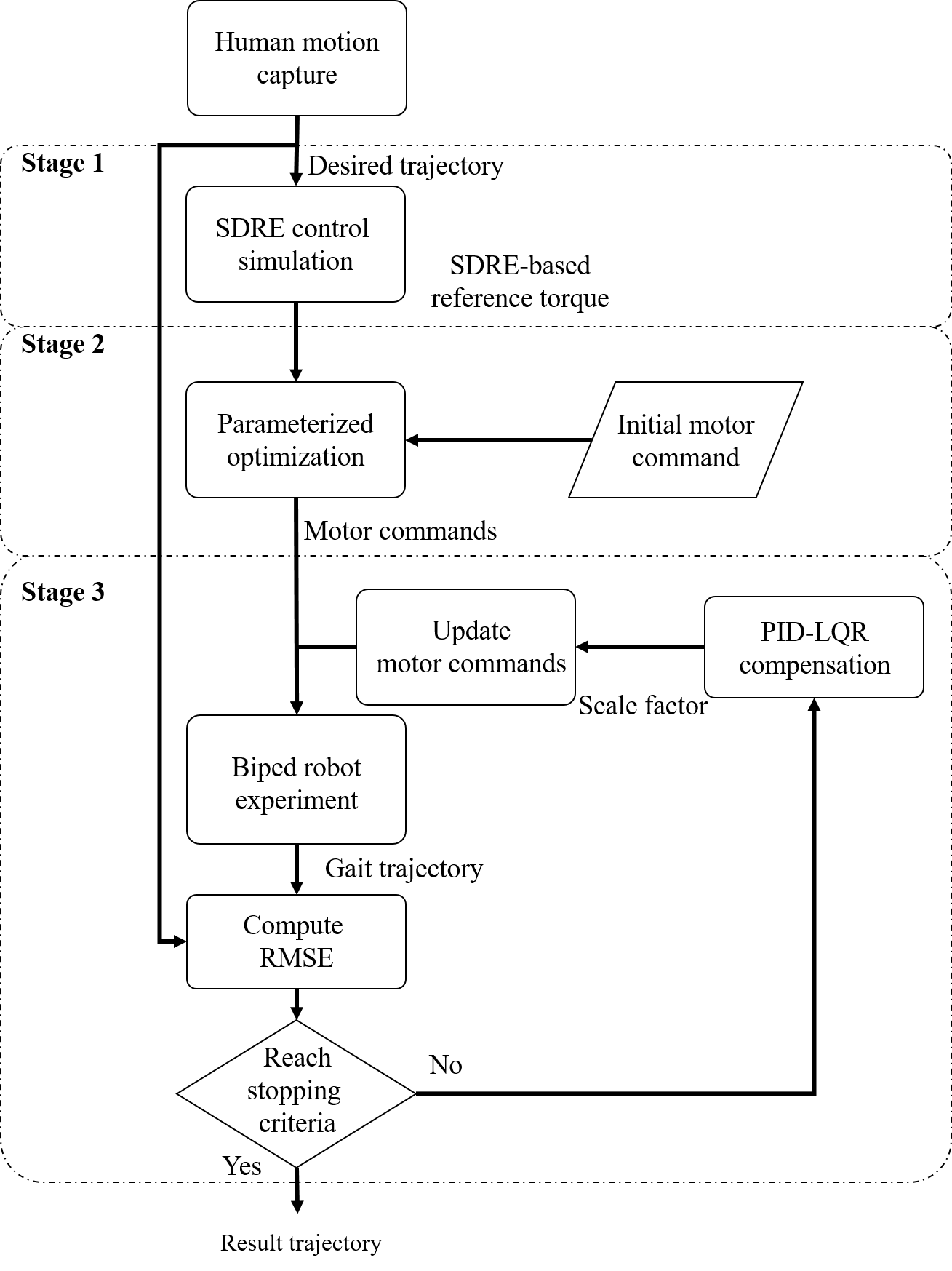}
     \caption{Flowchart of the proposed three-stage control procedure.}
    \label{fig:BlockDiagram}
\end{figure}

\subsection{SDRE Controller}
As established in Section II, the states of the bipedal robot's joints are driven toward the target motion profile via the torques derived from the dynamic system \eqref{eq:dynamics}. 
These torques provide a model-based reference for the platform's motion reproduction. Since these torques are generated indirectly through commanded joint motions, a model-based reference is essential to characterize the ideal behavior of the system under its inherent dynamics.
To obtain this reference, the SDRE framework is employed. By minimizing a prescribed tracking cost relative to the dynamic model, the SDRE framework derives a torque profile that serves as the foundation for the parameterized optimization discussed in the subsequent section. 
The performance of the SDRE controller is quantified by a quadratic cost function that balances tracking accuracy against control effort.

\noindent The quadratic cost function must be defined to quantify the tracking performance of the system states as well as the control effort exerted over time. The standard formulation of this cost function is given by \eqref{eq:costsdre}:
\begin{equation}\label{eq:costsdre}
    J = \int_0^{\infty} \left( \bm{x}(t)^T \bm{Q} \bm{x}(t) + \bm{u}(t)^T \bm{R} \bm{u}(t) \right) dt
\end{equation}
where \(\bm{Q}(\bm{x})\) and \(\bm{R}(\bm{x})\) are state-dependent weighting matrices. $\bm{Q}$ and $\bm{R}$ are associated with the state error and the control gain, respectively. In general, the matrix \(\bm{Q} \succeq 0\), \(\bm{R} \succ 0\) and both matrices are symmetric. For the purposes of this study, these matrices are assumed to be constant denoted as \(\bm{Q}(\bm{x}) = \bm{Q}\) and \(\bm{R}(\bm{x}) = \bm{R}\). To determine the optimal control law, one must solve the state-dependent algebraic Riccati equation (ARE):
\begin{equation}\label{eq:riccati} 
\begin{aligned} 
&\bm{P}(\bm{x})\bm{A}(\bm{x}) +\bm{A}(\bm{x})^{T}\bm{P}(\bm{x}) \\
&\quad -\bm{P}(\bm{x})\bm{B}(\bm{x})\bm{R}^{-1} \bm{B}(\bm{x})^{T}\bm{P}(\bm{x})+\bm{Q}=0. 
\end{aligned} 
\end{equation}
where \(\bm{P}(\bm{x})\) is the state-dependent positive definite matrix and the optimal control law of the equation \eqref{eq:dynamics} can be evaluated as:
\begin{equation}\label{eq:u}
    \bm{u} = -\bm{R}^{-1} \bm{B}(\bm{x}(t))^T \bm{P}(\bm{x}(t)) \bm{x}(t)
\end{equation}
This formulation yields a model-based torque reference that
is optimal at each point along the state trajectory, providing a robust benchmark for subsequent command generation.\\
To ensure the theoretical validity of the SDRE controller, the solvability of the ARE and the stability of the error dynamics must be guaranteed. Following the criteria established by Zhou~\cite{Zhou:1998}, the following theorems are adopted as a verification condition for the existence of the solution and the stability of the dynamic system associated with the single-leg model.
\begin{theorem}\label{thm:stable}
Suppose that $\bm{(A,B)}$ is stabilizable and $\bm{(A,C)}$ is detectable.
Then the Riccati equation in \eqref{eq:riccati} has a unique positive semi-definite solution $\bm{P}$.
Moreover, the matrix $\bm{(A - BR^{-1}B^{T}P)}$ is stable.
\end{theorem}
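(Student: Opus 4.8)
The plan is to treat \eqref{eq:riccati} as a classical algebraic Riccati equation (ARE): at each frozen state $\bm{x}$ the state-dependent coefficients $\bm{A}(\bm{x})$, $\bm{B}(\bm{x})$ are constant matrices, and with $\bm{Q}\succeq\bm{0}$, $\bm{R}\succ\bm{0}$ the hypotheses are precisely those of the standard infinite-horizon LQR existence theorem \cite{Anderson:1990}. My strategy is therefore to reconstruct that proof in three stages: (i) build a candidate $\bm{P}$ from the stable invariant subspace of the associated Hamiltonian matrix; (ii) verify that $\bm{P}$ is symmetric, positive semi-definite, and solves \eqref{eq:riccati}; and (iii) deduce closed-loop stability and uniqueness by a Lyapunov rearrangement of the ARE.

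For existence I would introduce the $10\times 10$ Hamiltonian matrix
\begin{equation}
\bm{H}=\begin{bmatrix}\bm{A} & -\bm{B}\bm{R}^{-1}\bm{B}^T\\ -\bm{Q} & -\bm{A}^T\end{bmatrix},
\end{equation}
whose spectrum is symmetric about the imaginary axis. The crucial step is to show that $\bm{H}$ has no eigenvalue on the imaginary axis; this is exactly where stabilizability of $(\bm{A},\bm{B})$ and detectability of $(\bm{A},\bm{C})$ are consumed, through a Hautus-type contradiction argument on $\bm{H}$ analogous to the ones used in the preceding two theorems. Granted this, precisely five eigenvalues lie in the open left half-plane, and collecting a basis of the corresponding invariant subspace as the columns of $[\bm{X}_1^T\ \bm{X}_2^T]^T$, stabilizability forces $\bm{X}_1$ to be nonsingular, so I may set $\bm{P}=\bm{X}_2\bm{X}_1^{-1}$.

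Stage (ii) is then routine: the invariance relation $\bm{H}[\bm{X}_1^T\ \bm{X}_2^T]^T=[\bm{X}_1^T\ \bm{X}_2^T]^T\bm{\Lambda}$ with $\bm{\Lambda}$ Hurwitz yields, after eliminating $\bm{\Lambda}$, that $\bm{P}$ satisfies \eqref{eq:riccati}; symmetry follows from the Hamiltonian structure (symmetry of $\bm{X}_1^T\bm{X}_2$), and semi-definiteness from the Lyapunov identity below. For the closed-loop claim I complete the square about $\bm{A}_c=\bm{A}-\bm{B}\bm{R}^{-1}\bm{B}^T\bm{P}$, rewriting \eqref{eq:riccati} as
\begin{equation}
\bm{P}\bm{A}_c+\bm{A}_c^T\bm{P}=-\bigl(\bm{Q}+\bm{P}\bm{B}\bm{R}^{-1}\bm{B}^T\bm{P}\bigr),
\end{equation}
a Lyapunov equation with semi-definite right-hand side; detectability then promotes this to asymptotic stability of $\bm{A}_c$, i.e. $\bm{(A-BR^{-1}B^TP)}$ is stable. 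Uniqueness follows by subtracting the AREs of two semi-definite solutions $\bm{P}_1,\bm{P}_2$, obtaining a Sylvester equation in $\bm{P}_1-\bm{P}_2$ whose coefficient matrices are the two (stable) closed-loop matrices, which forces $\bm{P}_1=\bm{P}_2$.

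The main obstacle is stage (i), the no-imaginary-axis-eigenvalue property together with the invertibility of $\bm{X}_1$, since these are the only places the structural hypotheses genuinely enter and they require care with the Hamiltonian's generalized eigenstructure. A secondary point to make precise is the compatibility between the detectability hypothesis, stated for $(\bm{A},\bm{C})$, and the weighting $\bm{Q}$ appearing in \eqref{eq:riccati}: the Lyapunov argument actually needs detectability of $(\bm{A},\bm{Q}^{1/2})$, so I would either assume $\bm{Q}$ is chosen so that $\bm{Q}\succeq\bm{C}^T\bm{C}$, whence $(\bm{A},\bm{Q}^{1/2})$ inherits detectability from $(\bm{A},\bm{C})$, or state this compatibility explicitly as part of the design.
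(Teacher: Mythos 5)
The paper does not prove this theorem at all: it is quoted verbatim as a standard result and justified only by the citation to Zhou and Doyle's \emph{Essentials of Robust Control}. Your outline --- Hamiltonian matrix, no imaginary-axis eigenvalues via a Hautus argument consuming stabilizability and detectability, $\bm{P}=\bm{X}_2\bm{X}_1^{-1}$ from the stable invariant subspace, the Lyapunov rearrangement $\bm{P}\bm{A}_c+\bm{A}_c^T\bm{P}=-(\bm{Q}+\bm{P}\bm{B}\bm{R}^{-1}\bm{B}^T\bm{P})$ for closed-loop stability, and a Sylvester equation for uniqueness --- is precisely the proof given in that reference, so it is correct and is ``the same approach'' in the only meaningful sense available here. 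Your closing caveat is also well taken: the theorem as stated hypothesizes detectability of $(\bm{A},\bm{C})$ with $\bm{C}=\bm{I}_2\in\mathbb{R}^{2\times2}$ (which is dimensionally inconsistent with the $5\times5$ matrix $\bm{A}$), whereas the Lyapunov step needs detectability of $(\bm{A},\bm{Q}^{1/2})$; in the paper's experiments $\bm{Q}$ is diagonal with strictly positive entries, hence positive definite, so the needed condition holds trivially, but the statement itself glosses over exactly the compatibility issue you identify.
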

When the conditions stated in Theorem~\ref{thm:stable} hold, the ARE \eqref{eq:riccati} has a unique solution $P$. By applying the control rule \eqref{eq:u}, the dynamic system \eqref{eq:dynamics} for the state error becomes
\begin{equation}\label{eq:error_dynamic}
\begin{aligned}
    \bm{\dot{x}} &= \bm{A}\bm{x} + \bm{B}\bm{u}\\
    &= \bm{A}\bm{x} + \bm{B}(-\bm{R}^{-1}\bm{B}^T\bm{P})\bm{x}\\
    &= (\bm{A} - \bm{B}\bm{R}^{-1}\bm{B}^T\bm{P})\bm{x}.
\end{aligned}  
\end{equation}
Clearly, Theorems~\ref{thm:stable} and \eqref{eq:error_dynamic} ensure that the state error diminishes over time, guaranteeing the asymptotic stability of the system.
The remaining step is to verify that the state-dependent matrices used in this SDRE formulation satisfy the stabilizability and detectability requirements.
Since $\bm{A}$ and $\bm{B}$ in \eqref{eq:dynamics_A} and \eqref{eq:dynamics_B} contain $\bm{M}^{-1}(\theta)$, the formulation is considered over the motion range where $\bm{M}(\theta)$ remains nonsingular.
Under this regularity condition, the stabilizability of $(\bm{A},\bm{B})$ and detectability of $(\bm{A},\bm{C})$ are verified in Theorems~\ref{thm:stabilizable} and~\ref{thm:detectable} using the Hautus Lemma~\cite{Anderson:1990}.
\begin{theorem}\label{thm:stabilizable}
The pair $(\bm{A,B})$ is stabilizable for matrices $\bm{A}$ and $\bm{B}$ in \eqref{eq:dynamics_A} and \eqref{eq:dynamics_B}, respectively.
\end{theorem}
\begin{IEEEproof}
    \noindent To verify stabilizability, the equivalent Hautus condition in \cite{Anderson:1990} is applied:
     A pair $(\bm{A},\bm{B})$ is stabilizable if every left eigenvector $\bm{w} \neq 0$ of $A$
    associated with an eigenvalue $\lambda$ satisfying $\mathrm{Re}(\lambda)\ge 0$
    also satisfies $\bm{w^{T}B} \neq \bm{0}$.\\
    Suppose, for contradiction, that there exists a vector
    \[
    \bm{w} = [w_1, w_2, w_3, w_4, w_5]^{T} \neq \bm{0}.
    \]
    satisfying
    \[
    \bm{w}^{T}\bm{A} = \lambda \bm{w}^{T}, \qquad
    \mathrm{Re}(\lambda)\ge 0, \qquad
    \bm{w^{T}B} = \bm{0}.
    \]
    The goal is to show that these assumptions necessarily lead to $\bm{w}=0$.
    According to the form of $\bm{B}$ in \eqref{eq:dynamics_B}, one has 
    \begin{equation}
        \bm{w}^T \bm{B} = \bm{w}^T \begin{bmatrix} \bm{0} \\ \bm{M^{-1}}\\ \bm{0} \end{bmatrix} = [w_3, w_4] \bm{M^{-1}} = [0, 0].
    \end{equation}

    Now, consider
    \begin{equation}
    \bm{w}^T \bm{A} = \bm{w}^T \begin{bmatrix} \bm{0} & \bm{I} & \bm{0}\\ -\bm{M^{-1} G_{sd}} & -\bm{M^{-1} V} & \bm{-M^{-1}g_f}\\ \bm{0} & \bm{0} & -\bm{\eta}\end{bmatrix}.        
    \end{equation}
    Since the left block column and right block column are:
    \begin{equation}
    \begin{aligned}
    &\bm{w}^T \begin{bmatrix} \bm{0} \\ -\bm{M^{-1} G_{sd}} \\ \bm{0}\end{bmatrix} = -[w_3, w_4] \bm{M^{-1}G_{sd}} = [0, 0],\\
    &\bm{w}^T \begin{bmatrix} I \\ -\bm{M}^{-1} \bm{V} \\ \bm{0}\end{bmatrix} = [w_1, w_2] - [w_3, w_4] \bm{M}^{-1} \bm{V} = [w_1, w_2], \\
    &\bm{w}^T \begin{bmatrix} \bm{0} \\ -\bm{M^{-1} g_f} \\ -\bm{\eta}\end{bmatrix} = -\eta w_5,
    \end{aligned}
    \end{equation}
    we have
    \begin{equation}
    \begin{aligned}
    \bm{w}^T \bm{A} &= [0, 0, w_1, w_2, -\eta w_5] \\
          &= [\lambda w_1, \lambda w_2, \lambda w_3, \lambda w_4, \lambda w_5],
    \end{aligned}
    \end{equation}
    by the assumption that \( \bm{w}^T \bm{A} = \lambda \bm{w}^T \).
    The first two components of $\bm{w^TA}$ leads to 2 possible cases (1) $ \lambda = 0 $ and (2) $[w_1,w_2]=[0,0]$. Since case (1) leads to the result that $[w_1,w_2]=[0,0]$, we only need to consider case (2).
    For case (2), to show that \( \bm{w = 0} \), we have
    \begin{equation}
    [\lambda w_1, \lambda w_2, \lambda w_3, \lambda w_4, \lambda w_5] = [0,0,0,0,-\eta w_5].
    \end{equation}
    Note that $\eta>0$ is a parameter chosen by user, and $\lambda\neq0$ in case 2.  Therefore, the equality $\lambda w_5=-\eta w_5$ holds only if $w_5=0$ and it implies that $\bm{w=0}$.\\
\end{IEEEproof}

\begin{theorem}\label{thm:detectable}
    The pair $(\bm{A},\bm{C})$ is detectable for matrices $\bm{A}$ and $\bm{C}$ in \eqref{eq:dynamics_A} and \eqref{eq:dynamics_observer} , respectively.
\end{theorem}
\begin{IEEEproof}
To establish detectability, consider an equivalent condition from
\cite{Anderson:1990}: the pair $(\bm{A},\bm{C})$ is detectable if every nonzero left
eigenvector $\bm{w}$ of $\bm{A}$ associated with an eigenvalue $\lambda$ satisfying
$\mathrm{Re}(\lambda) \ge 0$ also satisfies $\bm{Cw} \neq \bm{0}$.\\
In the present system, the output matrix is given by $\bm{C} = \bm{I_{5}}$.
For any eigenvector $\bm{w} \neq \bm{0}$, this yields
\[
\bm{Cw} = \bm{I_{5} w} = \bm{w} \neq \bm{0},
\]
and therefore the detectability condition is automatically satisfied.
Hence, the pair $(\bm{A},\bm{C})$ is detectable.
\end{IEEEproof}
\noindent Verification of stabilizability and detectability  confirms that the SDRE control law is executable and that the error dynamics remains stable. However, while this theoretical control effectively drives the idealized model, these values may not be feasible in practice due to modeling inaccuracies and actuator constraints. To address this, an optimal parameterization technique is introduced in the following section to approximate the SDRE torques within the feasible command space of the motors, thereby converting the torque profiles into actuator-feasible command trajectories.

\subsection{Parameterized Optimization}
Although the SDRE controller provides theoretically optimal torque inputs, these idealized signals do not account for the practical characteristics and physical constraints of the joint actuators. 
To translate SDRE torque commands into executable motor inputs, a command structure compatible with servomotor actuation is required. 
Trapezoidal motion profiles are widely adopted in control systems due to their structural simplicity, continuous velocity characteristics, and low parameter count. 
Their analytical form supports efficient constraint evaluation and facilitates real-time execution. 
Building on these advantages, prior studies have adapted trapezoidal profiles for specialized applications, such as the asymmetric three-segment velocity model by Chettibi et al.~\cite{Chettibi:2006} and the symmetric acceleration profiles for vibration-sensitive systems by Cusimano et al.~\cite{Cusimano:2022}. 
In this study, we represent the joint angular velocity using a parameterized piecewise-linear velocity model.

Unlike conventional trapezoidal profiles with fixed three-phase structures, our piecewise-linear parameterization offers greater flexibility by allowing both segment shapes and switching instants to vary. This adaptability is necessary because optimal torques generally cannot be perfectly replicated by rigid trapezoidal profiles, and the duration of command windows must be optimized to satisfy the physical constraints of the servomotors. Under this parameterization, the trajectory is determined by three sets of optimization parameters: command update instants, profile velocities, and profile accelerations. Additionally, the SDRE-generated angle profile is imposed as the target position at each command instant to ensure consistency with the desired gait. Based
on these optimized command instants, feasible motor-actuator commands can be
executed to approximate the reference torques derived from SDRE (the stage 1) 
while respecting the physical limits of the servomotors.

The proposed approach is motivated by the hardware specifications of the PH54-200-S500-R actuator~\cite{Robotis}. This motor executes a symmetric trapezoidal velocity profile comprising acceleration, constant-velocity, and deceleration phases. The resulting trajectory is governed by the target position, profile velocity, and profile acceleration. Crucially, the actuator supports an override mechanism that allows a new command to modify an in-progress profile before completion while maintaining velocity continuity. This capability provides the mechanical foundation for the piecewise-linear model developed here. To ensure high-fidelity motion reproduction, the command update instants are included as optimization variables. This addresses limitations found in prior studies, such as Lan et al.~\cite{Lan:2021a}, where imposing complete trapezoidal profiles over fixed intervals made it difficult to accurately track complex human motion. By optimizing the timing of these commands, we increase the system's adaptability to diverse motion profiles.

The piecewise-linear velocity profile adopted in this study is defined by the profile velocity, the profile acceleration, and the command update instants. However, during optimization, a full trapezoidal pattern may not be finished in the time segment between two consecutive command update instants. A situation that the next command update instant falls within the constant-velocity phase, at the $k$-th segment, is illustrated in Fig.~\ref{fig:diagram_wt}.
\begin{figure}[ht!]
    \centering
    \begin{tikzpicture}
        % Axes
        \draw[->] (0,0) -- (4,0) node[right] {$t$};
        \draw[->] (0,0) -- (0,4) node[above] {$\omega$};
        
        % Dashed lines for levels and times
        \draw[dash pattern=on 3pt off 3pt] (0,2) -- (4,2);
        \draw[dash pattern=on 3pt off 3pt] (0,3) -- (4,3);
        \draw[dash pattern=on 3pt off 3pt] (1,0) -- (1,3);
        \draw[dash pattern=on 3pt off 3pt] (3,0) -- (3,3);

        % Time labels
        \node[below] at (0,0) {$\xi_0^k$};
        \node[below] at (1,0) {$\xi_1^k$};
        \node[below] at (3,0) {$\xi_0^{k+1}$};
        
        % Weight labels
        \node[left] at (0,2) {$\omega_{0}^k$};
        \node[left] at (0,3) {$\omega^{k}$};
        
        % Solid line path
        \draw[thick] (0,2) -- (1,3) -- (3,3);
    \end{tikzpicture}
    \caption{Piecewise linear velocity parameterization model}
    \label{fig:diagram_wt}
\end{figure}

\noindent where $\xi_0^k$ and $\omega^k$  denote the command update instant and the profile velocity (i.e. the specifies the target constant velocity reached after the acceleration phase), respectively, and $\alpha^k$ denotes the profile acceleration that governs the rate of change of velocity from $\omega_{0}^k$ to $\omega^k$.
The time instant $\xi_1^k$ defined in Equation \eqref{eq:t1k} denotes the transition time at which the angular velocity reaches $\omega^k$ starting from $\omega_{0}^k$ based on the angular acceleration $\alpha^k$:
\begin{equation}\label{eq:t1k}
\xi_1^k :=
\begin{cases}
\xi_0^k + \dfrac{\omega^k - \omega_{0}^k}{\alpha^k}, & \text{if } \alpha^k \neq 0 \\[10pt]
\xi_0^k, & \text{if } \alpha^k = 0
\end{cases}
\end{equation}
Notice that $\xi_1^k$ is not a predefined variable but is implicitly determined by the optimization variables $\alpha^k$, $\omega^k$, and $\xi_0^k$, with the initial velocity $\omega_{0}^k$ which is obtained from the ending velocity of the previous segment to ensure the continuity of velocity.

\noindent With the piecewise linear velocity model established for each time segment, the angular position and angular velocity profiles of the motor can be expressed analytically for the duration given to complete a desired motion gait. Consider the time interval $[0, T]$, and a sequence of command update instants $\{\xi_0^i\}_{i=1}^n$ ($\xi_0^1 = 0$, $\xi_0^n = T$). Within each subinterval $[\xi_0^k, \xi_0^{k+1}]$ for $k = 1, 2, \ldots, n - 1$, the angular velocity from our piecewise linear parameterization is given as a function of $t$ in \eqref{eq:wk}:
\begin{equation}\label{eq:wk}
\Tilde{\omega}(t) = 
\begin{cases} 
\omega_{0}^k + \alpha^k (t - \xi_0^k), & \text{for } \xi_0^k \leq t \leq \xi_1^k, \\
\omega^k, & \text{for } \xi_1^k < t \leq \xi_0^{k+1}.
\end{cases}
\end{equation}
The corresponding angle profile can be obtained through integration and is expressed analytically in the following equation \eqref{eq:tilde_theta}:
\begin{equation}\label{eq:tilde_theta}
\Tilde{\theta}(t) =
\begin{cases}
\begin{aligned}
\dfrac{\alpha^k}{2}t^2 
&+ (\omega_{0}^k-\alpha^k \xi_0^k)t + \Tilde{\theta}(\xi_0^k) \\
&- \omega_{0}^k\xi_0^k + \dfrac{\alpha^k}{2}(\xi_0^k)^2,
\end{aligned}
& \xi_0^k \le t \le \xi_1^k,\\
\\
\begin{aligned}
\omega^kt 
&+ \big[\Tilde{\theta}(\xi_1^k) - \omega^k \xi_1^k\big],
\end{aligned}
& \xi_1^k < t \le \xi_0^{k+1}.
\end{cases}
\end{equation}
The parameters $\omega^k, \alpha^k$ are bounded by the motor's physical speed and acceleration constraints as shown in \eqref{eq:parameter_constraint}:
\begin{equation}\label{eq:parameter_constraint}
    \begin{aligned}
        \omega_{min} \leq &\omega^k \leq \omega_{max}, \quad \text{for } k = 1, \ldots, n,\\
        \alpha_{min} \leq &\alpha^k \leq \alpha_{max}, \quad \text{for } k = 1, \ldots, n.
    \end{aligned}
\end{equation}
Since the command update instants are determined by the optimization, parameterization  formulations, different from the expressions in \eqref{eq:wk}-\eqref{eq:tilde_theta}, may appear. For example, a segment containing only acceleration or one completing the full trapezoidal velocity profile are possible. For these cases, the angles of angle $\tilde{\theta}$
and angular velocity $\tilde{\omega}$ can still be derived using a similar procedure as discussed above. With these profiles of angles and velocities for motors in the single-leg model, one can compute the corresponding torque profile $\tilde{\tau}(t)$ from the dynamic equation~\eqref{eq:dynamics_state} required in the evaluation of the optimization cost function.\\
Accordingly, the optimization problem for extracting the command sequence of the bipedal robot can be formulated. Notice that motor commands for hip and knee joints of the single-leg model are given at the same time instant for each leg. The parameters of one leg are grouped together for optimization. Let
\begin{equation}
    \begin{alignedat}{2}
  \bm{\xi}_{L} &= [\xi_{0,L}^1,\ldots,\xi_{0,L}^n]^T,
&\;\bm{\xi}_{R} &= [\xi_{0,R}^1,\ldots,\xi_{0,R}^n]^T 
\end{alignedat}
\end{equation}
denote the vectors of the time instant parameter for the left leg and the right leg of the bipedal robot, respectively. 
The profile velocity and acceleration associated with $\bm{\xi}_{L}$ and $\bm{\xi}_{R}$ can be expressed as follows: 
\begin{equation}\label{eq:parameterL}
\bm{\omega}_{L} :=
    \bigl[\bm{\omega}_{HL}^T,\; \bm{\omega}_{KL}^T\bigr]^T,
\\
\bm{\alpha}_{L} :=
    \bigl[\bm{\alpha}_{HL}^T,\; \bm{\alpha}_{KL}^T\bigr]^T
\end{equation}
\begin{equation}\label{eq:parameterR}
\bm{\omega}_{R} :=
    \bigl[\bm{\omega}_{HR}^T,\; \bm{\omega}_{KR}^T\bigr]^T,
\\
\bm{\alpha}_{R} :=
    \bigl[\bm{\alpha}_{HR}^T,\; \bm{\alpha}_{KR}^T\bigr]^T
\end{equation}
with
\begin{equation}\label{eq:parameterLR}
\begin{alignedat}{2}
    \bm{\omega}_{HL} &= [\omega_{HL}^1,\ldots,\omega_{HL}^n]^T,
    &\;\bm{\omega}_{HR} &= [\omega_{HR}^1,\ldots,\omega_{HR}^n]^T\\
    \bm{\alpha}_{HL} &= [\alpha_{HL}^1,\ldots,\alpha_{HL}^n]^T,
    &\;\bm{\alpha}_{HR} &= [\alpha_{HR}^1,\ldots,\alpha_{HR}^n]^T\\
    \bm{\omega}_{KL} &= [\omega_{KL}^1,\ldots,\omega_{KL}^n]^T,
    &\;\bm{\omega}_{KR} &= [\omega_{KR}^1,\ldots,\omega_{KR}^n]^T\\
    \bm{\alpha}_{KL} &= [\alpha_{KL}^1,\ldots,\alpha_{KL}^n]^T,
    &\;\bm{\alpha}_{KR} &= [\alpha_{KR}^1,\ldots,\alpha_{KR}^n]^T,
\end{alignedat}
\end{equation}
where the subscripts HL and KL  correspond to the hip (H) and knee (K) joints of the left leg, while HR and KR correspond to those of the right leg. The optimization problem can now be formulated as
\begin{equation}\label{eq:paramter_optimal_problem}
%\begin{align*}
(\bm{\omega_\kappa^{*}},\bm{\alpha_\kappa^{*}},\bm{\xi_\kappa^{*}})
=
\underset {\bm{\omega_\kappa},\,\bm{\alpha_\kappa},\,\bm{\xi_\kappa}} {\text{arg min}}
\bm{J_\kappa}(\bm{\omega_\kappa},\bm{\alpha_\kappa},\bm{\xi_\kappa}),\\
\end{equation}

where $\kappa \in \{L,R\}$, subject to
\begin{equation}\label{eq:parameter_limit}
\begin{aligned}
\omega_{\min} \le {} & \omega_{\ell\kappa}^{k} \le \omega_{\max}
    && \text{for } \ell\in\{\text{H},\text{K}\},\; k=1,\ldots,n,\\
\alpha_{\min} \le {} & \alpha_{\ell\kappa}^{k} \le \alpha_{\max}
    && \text{for } \ell\in\{\text{H},\text{K}\},\; k=1,\ldots,n,\\
\end{aligned}
\end{equation}
and $0< \xi_{0,\kappa}^i < \xi_{0,\kappa}^j <T$ if $0<i<j<n$. 

For our single-leg model, a quadratic cost function $J_\kappa(\bm{\omega_\kappa},\bm{\alpha_\kappa},\bm{\xi_\kappa})$ shown in equation \eqref{eq:cost_param}:
\begin{equation}\label{eq:cost_param}
\begin{aligned}
J_{\kappa}(&\bm{\omega_\kappa},\bm{\alpha_\kappa},\bm{\xi_\kappa})
=
\\
&\left(
\frac{1}{T}
\int_{0}^{T}
\bm{e}_\kappa\!\left(t;\bm{\omega_\kappa},\bm{\alpha_\kappa},\bm{\xi_\kappa}\right)^{T}
\bm{W}\,
\bm{e}_\kappa\!\left(t;\bm{\omega_\kappa},\bm{\alpha_\kappa},\bm{\xi_\kappa}\right)
\, dt
\right)^{1/2}
\end{aligned}
\end{equation}
is considered. The diagonal matrix $\bm{W}\in\mathbb{R}^{2\times2}$ in \eqref{eq:cost_param} denotes the weighting matrix and $\bm{e_{\kappa}}$ is the torque error for the leg $\kappa \in \{L,R\}$ defined as $\bm{e}_{\kappa}(t;\bm{\omega_{\kappa}},\bm{\alpha_{\kappa}},\bm{\xi_{\kappa}}) = \bm{\tau_{\kappa}}(t) - \Tilde{\bm{\tau}}_{\kappa}(t;\bm{\omega_{\kappa}},\bm{\alpha_{\kappa}},\bm{\xi_{\kappa}})$, where $\bm{\tau_{\kappa}}(t)$ denotes the SDRE-based torque reference and $\Tilde{\bm{\tau}}_{\kappa}(t;\bm{\omega_{\kappa}},\bm{\alpha_{\kappa}},\bm{\xi_{\kappa}})$ represents the estimated torque induced by the parameterized command sequence. The optimal motor commands are obtained by minimizing the cost function using the MATLAB function \texttt{fmincon} with initial parameters, profile velocity and profile acceleration, being uniformly sampled from the SDRE-based state profiles presented in Section~II. The initial command update instants are uniformly distributed with a 0.25\,s interval to maintain adequate temporal resolution without causing convergence failures due to excessively dense segmentation.

Finally, the optimization procedure \eqref{eq:paramter_optimal_problem} generates individual command sequences for the hip and knee joints within each limb that share the same update instants as follows:
$C_{\ell,\kappa}=(\bm{\theta}_{\ell,\kappa}^*, \bm{\omega}_{\ell,\kappa}^*, \bm{\alpha}_{\ell,\kappa}^*)$, where $\ell \in \{H,K\}$, $\kappa \in \{L,R\}$, and $\bm{\theta}_{\ell,\kappa}^* = \bm{\theta}^d_{\ell,\kappa}$ is the desired trajectory angle associated with the optimized command instant $\bm{\xi}_{\kappa}^{*}$. These independently optimized command sequences $C_{\ell,\kappa}$ of the left and right leg are then temporally concatenated to form the complete and unified command set. The unified command is executed through the motor control board of the suspended bipedal robot to feasibly approximate the theoretical torque profiles from Section II, thereby enabling the reproduction of the desired human motion.

\subsection{Acceleration Compensation by off-line PID}
In the previous subsection, SDRE-based torque profiles and the parameterized velocity model were used to derive the motor commands required for execution. However, these derivations do not account for the actual system response of the physical bipedal platform such as joint-level friction, actuator saturation, or command transition latencies and structure limitations, etc. To quantify the performance deviation between the theoretical model and the real-world system, a preliminary tracking test was conducted. The results of this test reveal a significant acceleration deficit, which motivates the offline refinement procedure introduced in this subsection.

\subsubsection{Preliminary Tracking Analysis}
In this test, a sinusoidal reference trajectory was defined to evaluate the motion-tracking capability of the hip joint:
\begin{equation}\label{eq:sinusoidal_reference_angle}
    \theta^d(t)=10\left(1-\cos\left(\frac{\pi}{5}t\right)\right)
\end{equation}
\begin{equation}\label{eq:sinusoidal_reference_velocity}
    \dot{\theta}^d(t)=2\sin\left(\frac{\pi}{5}t\right)
\end{equation}
The optimal command sequence was generated following the procedures in Sections III.A and III.B. Only the hip joint of a single leg was driven, while all other joints remained inactive. Feedback angular data from the motor and its numerical derivative, denoted as $\theta^m(t)$ and $\dot{\theta}^m(t)$, respectively, were used to evaluate the tracking error.

As shown in Fig.~\ref{fig:sine_test}(a) and (b), significant deviations occur between the desired and experimental trajectories. The velocity tracking error in Fig.~\ref{fig:sine_test}(b) indicates that the magnitude of acceleration generated by the initial motor command is insufficient to track the desired velocity, causing the feedback angle to lag behind the reference. These tracking errors may arise because the dynamic model in \eqref{eq:dynamics} neglects frictional forces, or because the motors experience saturation, leading to temporary underactuation. These results suggest that an additional control strategy is required to compensate for the required motor acceleration.

\begin{figure}[!t]
    \centering
    \subfloat[]{%
        \includegraphics[width=0.95\linewidth]{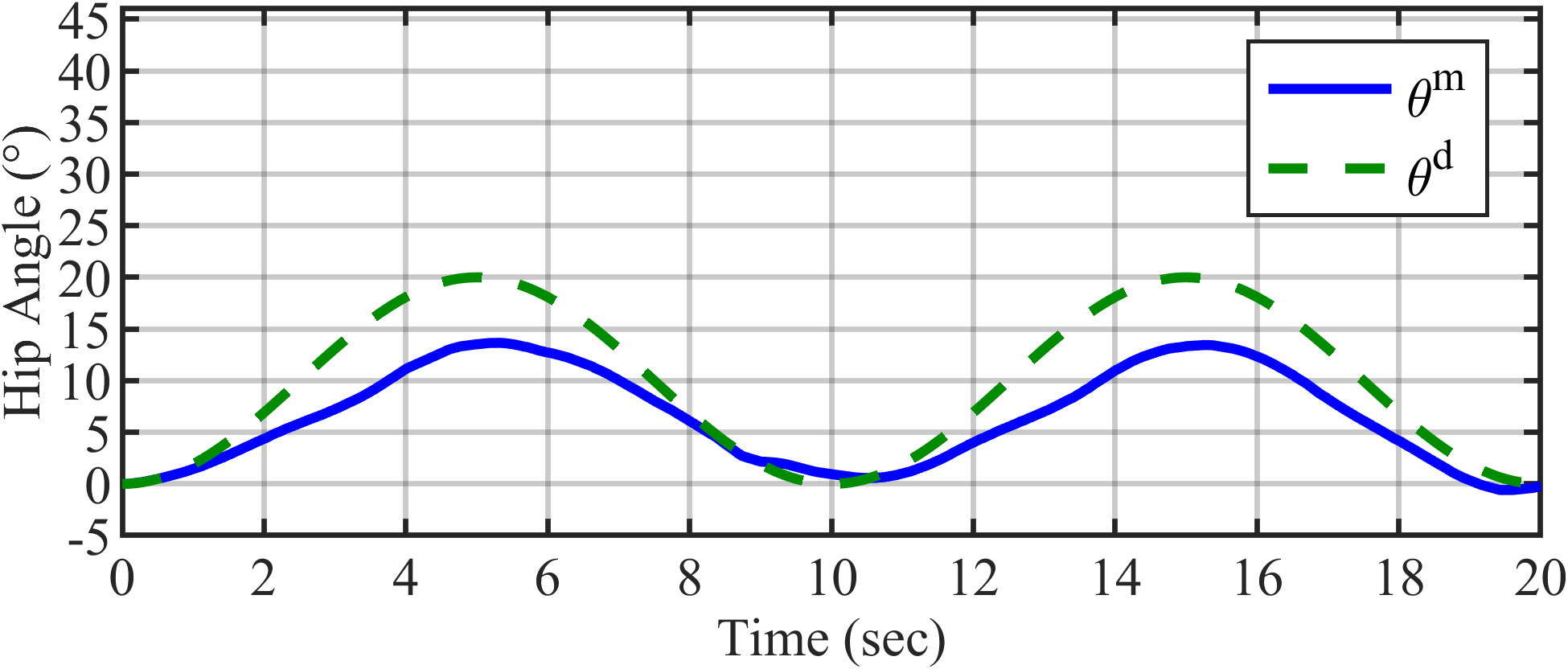}%
    }\hfill
    \subfloat[]{%
        \includegraphics[width=0.95\linewidth]{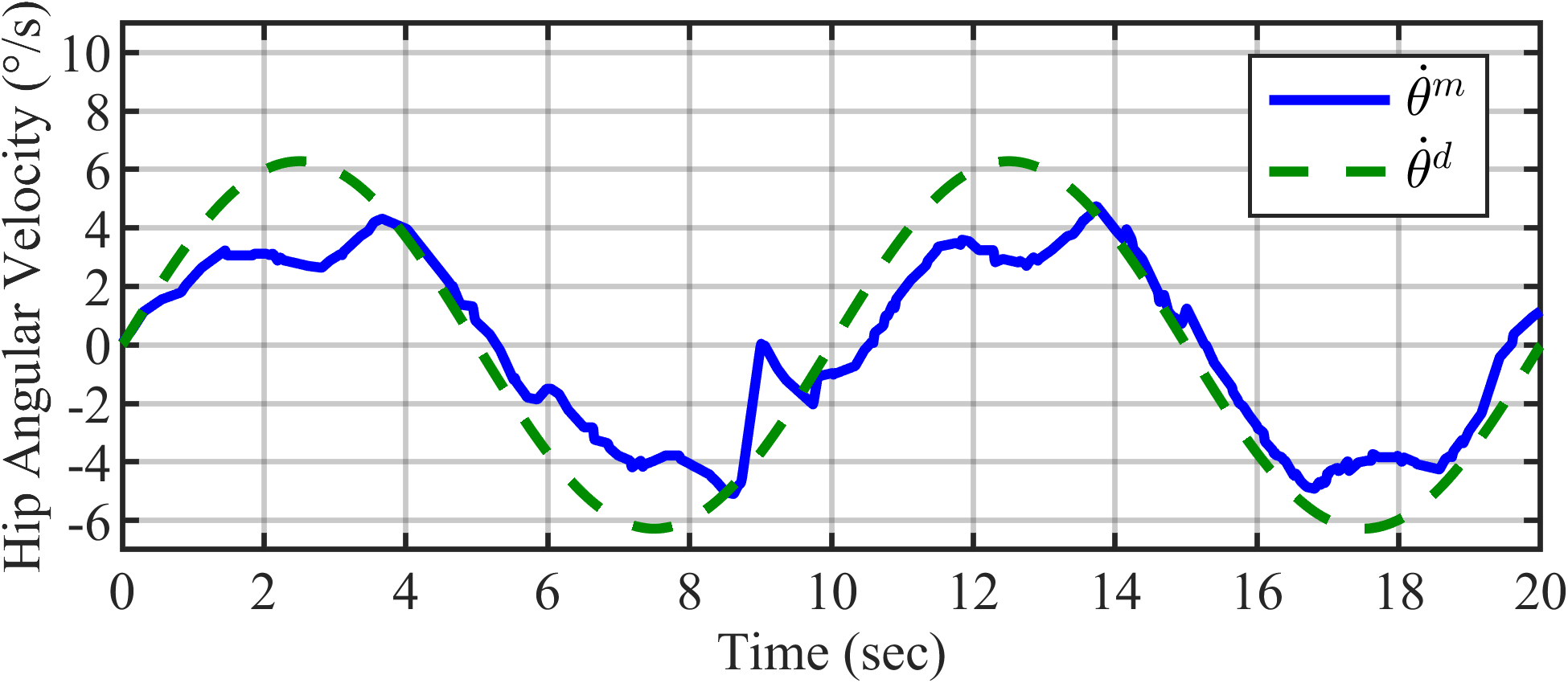}%
    }
    \caption{Tracking performance of the optimal motor command sequence during the sinusoidal preliminary test: (a) hip joint angle and (b) hip angular velocity.}
    \label{fig:sine_test}
\end{figure}

\subsubsection{PID-LQR Framework for Acceleration Scaling}
We propose an offline compensation procedure based on a combined PID-LQR framework to correct joint-angle and velocity discrepancies. We introduce a time-varying scaling factor, $\gamma(t)$, which modulates the acceleration command while preserving the underlying structure of the optimized command set. A traditional PID structure is defined as:
\begin{equation}
u_{PID}(t) = K_{p} e(t) + K_{i} \int_{0}^{t} e(s) ds + K_{d} \dot{e}(t)
\label{eq:PID_traditional}
\end{equation}
where $e(t)=\theta^d(t)-\theta^m(t)$. We formulate the PID output as $u_{PID}(t) = \gamma(t)\alpha(t)$, where $\gamma(t)$ is the acceleration scaling factor and $\alpha(t)$ is the acceleration command obtained from optimization. Accordingly:
\begin{equation}
\gamma(t)\alpha(t) = K_p e(t) + K_i \int_0^t e(s) ds + K_d \dot{e}(t)
\label{eq:PID}
\end{equation}

In this formulation, $\alpha(t)$ serves as a predetermined input and remains fixed during the subsequent derivation. By differentiating \eqref{eq:PID} and noting that the parameterized model uses piecewise-constant acceleration ($\dot{\alpha}(t)=0$ within each segment), the error dynamics can be expressed in the following state-space form:
\begin{equation}
\dot{\bm{x}}_e
=
\bm{A}_e\bm{x}_e+\bm{B}_e\dot{\gamma},
\label{eq:PIDviaLQR}
\end{equation}
where
\begin{equation}
\bm{x}_e=
\begin{bmatrix}
e\\
\dot{e}
\end{bmatrix},
\quad
\bm{A}_e=
\begin{bmatrix}
0 & 1\\
-\tfrac{K_i}{K_d} & -\tfrac{K_p}{K_d}
\end{bmatrix},
\quad
\bm{B}_e=
\begin{bmatrix}
0\\
\tfrac{\alpha}{K_d}
\end{bmatrix}
\label{eq:PID_LQR_state_matrices}
\end{equation}
Here, $\bm{x}_e$ denotes the tracking error state, while $\bm{A}_e$ and $\bm{B}_e$ denote the error-state matrix and input matrix, respectively. The correction term $\dot{\gamma}$ serves as the control input in \eqref{eq:PIDviaLQR} and is computed using the LQR method. The scaling factor $\gamma(t)$ is then obtained by integrating $\dot{\gamma}(t)$.
The observation model is defined as \eqref{eq:PID_LQR_observation_model}:
\begin{equation}
\bm{y}_e=\bm{C}_e\bm{x}_e,
\qquad
\bm{C}_e=\bm{I}_2 .
\label{eq:PID_LQR_observation_model}
\end{equation}
It should be noted that \eqref{eq:PIDviaLQR} does not represent a simplification of the original nonlinear robot dynamics. Instead, it is used to describe the state error evolution associated with the PID control structure, where the control action is expressed as a scaled acceleration input. This formulation is introduced to establish a state-space relationship between the tracking error and the correction term, enabling the application of LQR design. Furthermore, the proposed method does not assume time-invariant system behavior. The acceleration scaling factor is computed offline at each sampling instant, resulting in a time-varying sequence of correction terms. Consequently, the time-varying characteristics of the system are implicitly captured through this process.\\
Furthermore, since the compensated acceleration input $u_{PID}$ is included in the motor command, its feasibility must be considered in the command generation process. This requires that the correction input $\dot{\gamma}$ in \eqref{eq:PIDviaLQR} be well defined in the LQR formulation. Following the same procedure in SDRE analysis as shown in section II, the stabilizability of $(\bm{A}_e,\bm{B}_e)$ and the detectability of $(\bm{A}_e,\bm{C}_e)$ are verified using the Hautus Lemma \cite{Anderson:1990}. Once these conditions are satisfied, Theorem~\ref{thm:stable} provides the corresponding solvability condition for the Riccati equation.

\begin{theorem}\label{thm:PID_LQR_stabilizable_detectable}
Consider the representation of the error-state in \eqref{eq:PIDviaLQR}, with
$\bm{A}_e$ and $\bm{B}_e$ defined in \eqref{eq:PID_LQR_state_matrices} and
$\bm{C}_e$ defined in \eqref{eq:PID_LQR_observation_model}. If
$\alpha\neq 0$ and $K_d\neq 0$, then the pair $(\bm{A}_e,\bm{B}_e)$ is
stabilizable. Moreover, the pair $(\bm{A}_e,\bm{C}_e)$ is detectable.
\end{theorem}

\begin{IEEEproof}
By the Hautus Lemma \cite{Anderson:1990}, the pair
$(\bm{A}_e,\bm{B}_e)$ is stabilizable if
\begin{equation}
\operatorname{rank}
\begin{bmatrix}
\lambda\bm{I}_2-\bm{A}_e & \bm{B}_e
\end{bmatrix}
=2,
\quad
\forall \lambda\in\mathbb{C},\ \operatorname{Re}(\lambda)\geq 0 .
\label{eq:PID_LQR_Hautus_stabilizable}
\end{equation}
Substituting $\bm{A}_e$ and $\bm{B}_e$ from
\eqref{eq:PID_LQR_state_matrices} into
\eqref{eq:PID_LQR_Hautus_stabilizable} gives
\begin{equation}
\begin{bmatrix}
\lambda\bm{I}_2-\bm{A}_e & \bm{B}_e
\end{bmatrix}
=
\begin{bmatrix}
\lambda & -1 & 0\\
\tfrac{K_i}{K_d} & \lambda+\tfrac{K_p}{K_d} & \tfrac{\alpha}{K_d}
\end{bmatrix}.
\label{eq:PID_LQR_Hautus_stabilizable_matrix}
\end{equation}
The second and third columns of \eqref{eq:PID_LQR_Hautus_stabilizable_matrix}
form the following minor:
\begin{equation}
\det
\begin{bmatrix}
-1 & 0\\
\lambda+\tfrac{K_p}{K_d} & \tfrac{\alpha}{K_d}
\end{bmatrix}
=
-\tfrac{\alpha}{K_d}.
\label{eq:PID_LQR_Hautus_stabilizable_minor}
\end{equation}
Since $\alpha\neq 0$ and $K_d\neq 0$,
\eqref{eq:PID_LQR_Hautus_stabilizable_minor} is nonzero for all
$\lambda$. Therefore, \eqref{eq:PID_LQR_Hautus_stabilizable} holds and
$(\bm{A}_e,\bm{B}_e)$ is stabilizable. Similarly, the pair $(\bm{A}_e,\bm{C}_e)$ is detectable if
\begin{equation}
\operatorname{rank}
\begin{bmatrix}
\lambda\bm{I}_2-\bm{A}_e\\
\bm{C}_e
\end{bmatrix}
=2,
\quad
\forall \lambda\in\mathbb{C},\ \operatorname{Re}(\lambda)\geq 0 .
\label{eq:PID_LQR_Hautus_detectable}
\end{equation}
Using $\bm{C}_e=\bm{I}_2$ from
\eqref{eq:PID_LQR_observation_model}, the matrix in
\eqref{eq:PID_LQR_Hautus_detectable} becomes
\begin{equation}
\begin{bmatrix}
\lambda\bm{I}_2-\bm{A}_e\\
\bm{C}_e
\end{bmatrix}
=
\begin{bmatrix}
\lambda & -1\\
\tfrac{K_i}{K_d} & \lambda+\tfrac{K_p}{K_d}\\
1 & 0\\
0 & 1
\end{bmatrix}.
\label{eq:PID_LQR_Hautus_detectable_matrix}
\end{equation}
Since the last two rows of \eqref{eq:PID_LQR_Hautus_detectable_matrix}
form $\bm{I}_2$, the matrix has full column rank:
\begin{equation}
\operatorname{rank}
\begin{bmatrix}
\lambda\bm{I}_2-\bm{A}_e\\
\bm{C}_e
\end{bmatrix}
=2,
\quad
\forall \lambda\in\mathbb{C}.
\label{eq:PID_LQR_Hautus_detectable_rank}
\end{equation}
\end{IEEEproof}
% ==================================================================
Theorem~\ref{thm:PID_LQR_stabilizable_detectable} ensures that the LQR correction input $\dot{\gamma}$ in \eqref{eq:PIDviaLQR} can be obtained. %Let $\bar{\dot{\gamma}}$ denote the maximum absolute value of $\dot{\gamma}$ over the finite execution interval $[0,T]$. 
Let $\beta=max(|\dot{\gamma(t)}|)$ for $t\in[0,T]$ where $T$ is the total execution time. Obviously, the scaling factor $\gamma(t)$ is bounded during the execution time since  
\begin{equation}
    |\gamma(t)|
    \leq
    |\gamma(0)|+\beta T,
    \qquad
    0\leq t\leq T .
    \label{eq:gamma_upper_bound}
\end{equation}
Since $\alpha(t)$ is constrained in the parameterized optimization stage and $u_{PID}(t)=\gamma(t)\alpha(t)$, the compensated acceleration input $u_{PID}(t)$ remains bounded. As a result, the acceleration terms in the motor command sequence can be updated by the PID-LQR algorithm while satisfying the motor acceleration constraints. Furthermore, the above compensation process is repeated until the stopping criteria are satisfied, i.e., when the maximum absolute angular error becomes smaller than the prescribed tolerance or when the number of compensation iterations reaches the prescribed maximum number of iterations.
% ==================================================================
\subsubsection{Experimental Validation}
In the preliminary test, the PID-LQR controller was implemented with gains $K_p = 10^{-2}$, $K_i = 10^{-3}$, and $K_d = 50$, producing the scaling sequence $\gamma^i = \gamma(\xi^i)$ shown in Fig.~\ref{fig:Scale}. 
In this experiment, the stopping criteria are set as a maximum absolute angular error of $3^\circ$ and a maximum number of compensation iterations of 2. The modified command sequence $C^m_{H,\kappa}=(\bm{\theta}_{H,\kappa}^*,\bm{\omega}_{H,\kappa}^*, \gamma^i\bm{\alpha}_{H,\kappa}^*)$,where $\bm{\theta}_{H,\kappa}^*=\theta_{H,\kappa}^d( \xi_{\kappa}^*)$, was then executed on the hip joint for tracking the desired trajectory. 
The results, illustrated in Fig.~\ref{fig:sine_test2}, show that the maximum angular discrepancy was reduced  from $7.2^\circ$ to below $1.7^\circ$, and the velocity profile closely followed the desired path. This significant reduction in error demonstrates that the proposed offline PID-LQR controller effectively provides sufficient motor actuation to overcome unmodeled dynamics. Therefore, this compensation procedure is integrated into the control framework for the subsequent human gait experiments.

\begin{figure}[!t]
    \centering
    \includegraphics[width=0.95\linewidth]{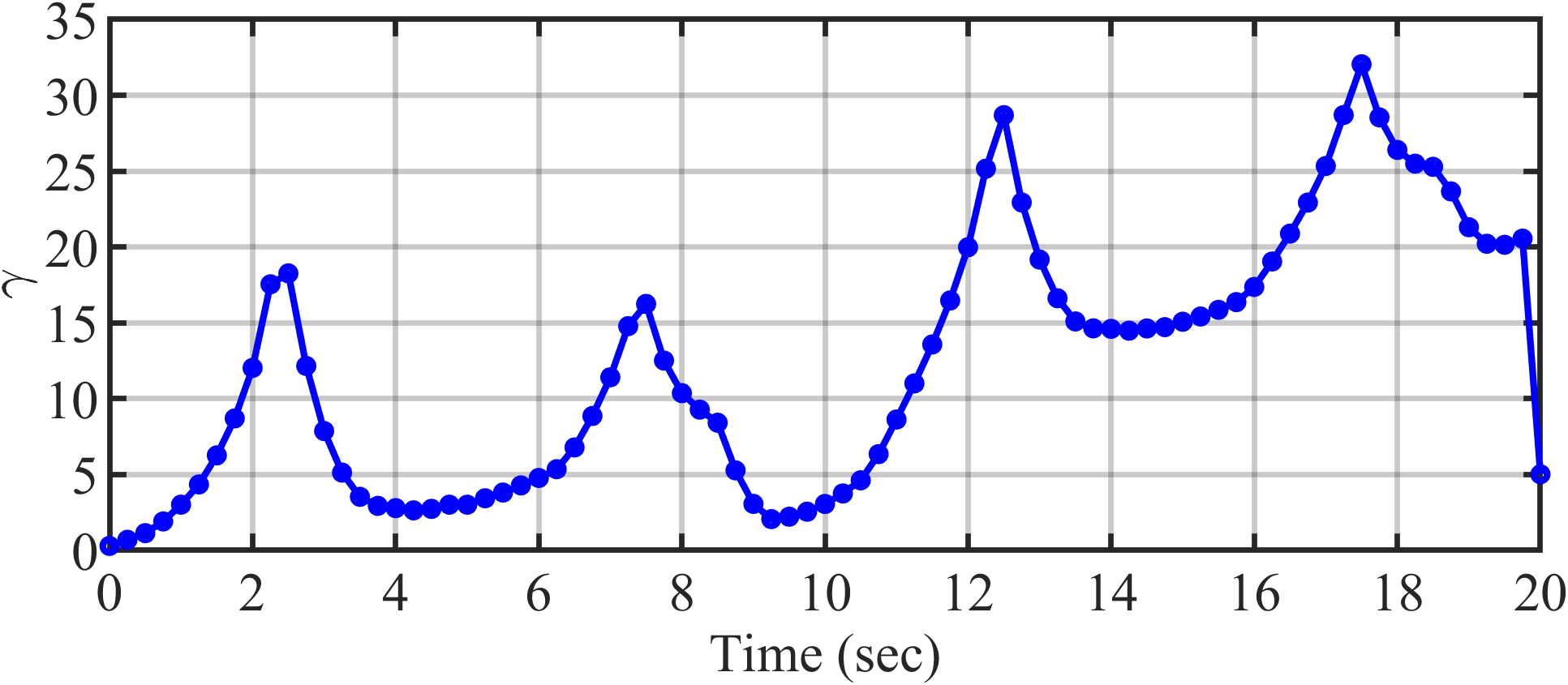}
    \caption{Acceleration scaling factors $\gamma(t)$ obtained from PID-LQR for the sinusoidal trajectory tracking test.}
    \label{fig:Scale}
\end{figure}
\begin{figure}[!t]
    \centering
    \subfloat[]{%
        \includegraphics[width=0.95\linewidth]{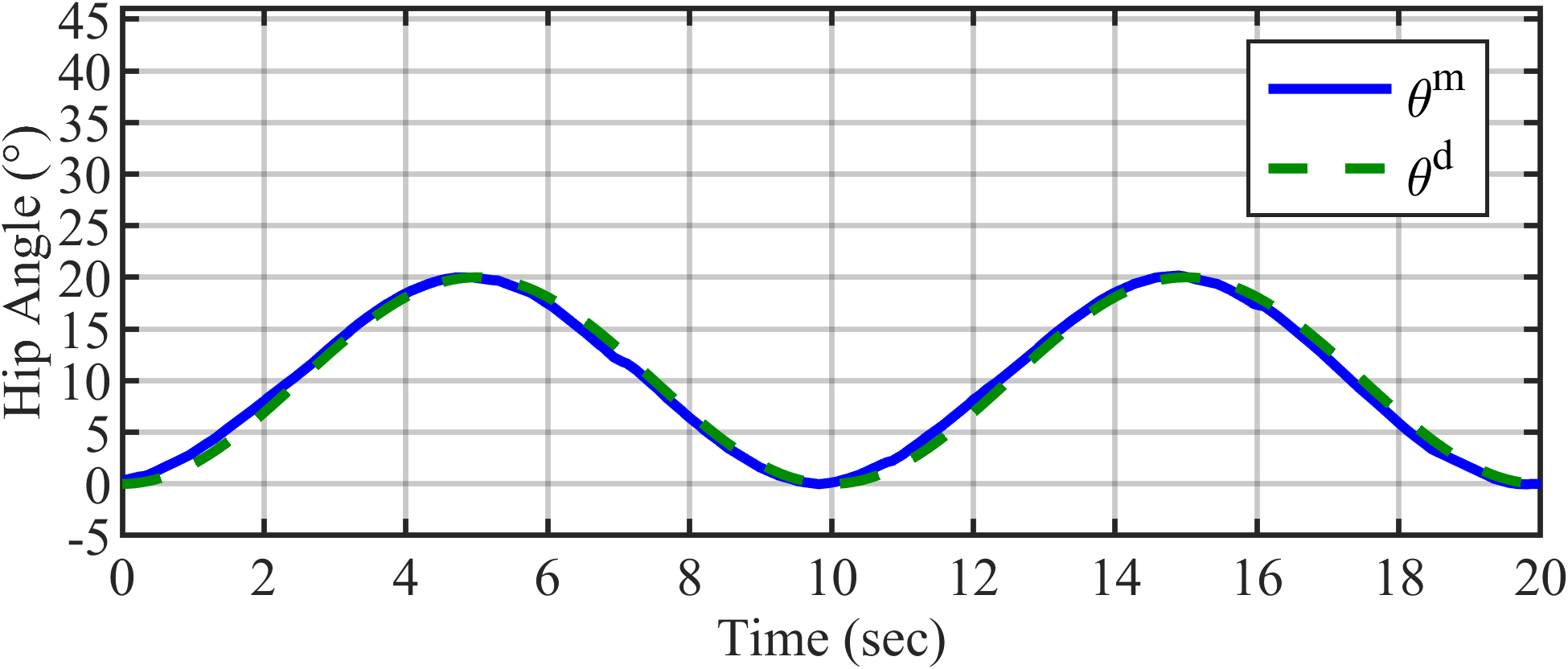}%
    }\hfill
    \subfloat[]{%
        \includegraphics[width=0.95\linewidth]{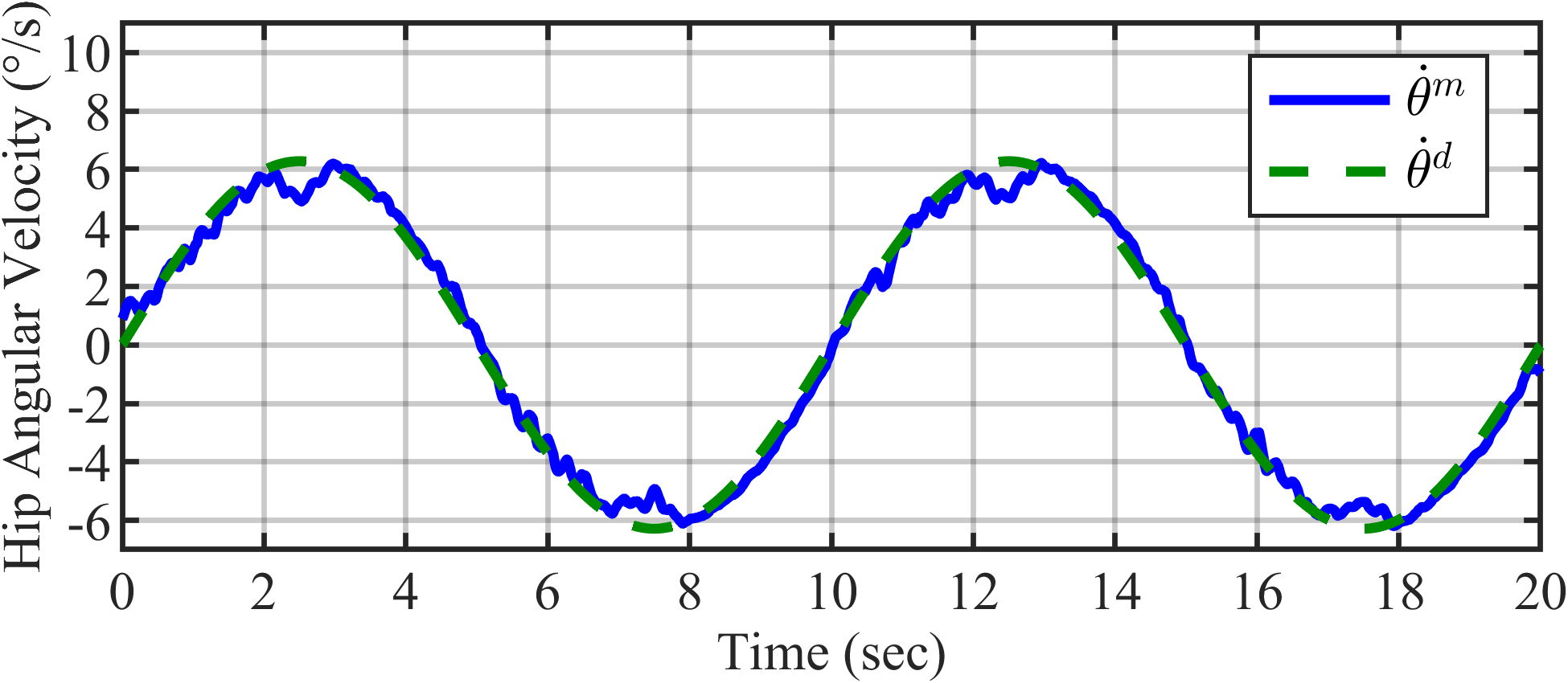}%
    }
    \caption{Refined tracking performance for the sinusoidal preliminary test after acceleration compensation via the PID-LQR framework: (a) hip joint angle and (b) hip angular velocity.}
    \label{fig:sine_test2}
\end{figure}
\section{Experimental Results}
This section details the experimental validation of the proposed control framework through human gait trajectory tracking. The bipedal robot platform utilizes Robotis PH54-200-S500-R actuators for joint motion, integrated into a lightweight framework of 3D-printed components. Motor control is facilitated via a custom C\# application developed in Visual Studio 2022, with commands transmitted through a USB-connected U2D2 interface. The theoretical components, including the SDRE control and parameterized optimization, are implemented in MATLAB R2020b. 

To evaluate performance, two baseline methods are implemented for comparison: Model Predictive Control (MPC) ~\cite{Chen:2018} and Improved Particle Swarm Optimization-based PID (IPSO-PID) ~\cite{Liu:2021} The MPC-based method computes control inputs through a quadratic programming formulation, while the IPSO-PID approach adaptively tunes PID parameters using an improved particle swarm optimization strategy. These baselines provide a benchmark for assessing tracking accuracy and repeatability under identical experimental conditions.

\subsection{Human Gait Data and SLLM Mapping}
Desired trajectories are obtained from a Vicon motion-capture system. Fig.~\ref{fig:vicon_skeleton}(a) illustrates the lower-limb skeleton captured by the Vicon system. To ensure kinematic consistency between the captured human data and the bipedal model, the Vicon skeleton is first reduced to six discrete joints to form a Simplified Lower Limb Model (SLLM) following the method described by Hsu~\cite{Hsu:2025}, as shown in Fig.~\ref{fig:vicon_skeleton}(b).
\begin{figure}[htbp]
  \centering
  \subfloat[]{%
    \includegraphics[width=0.22\textwidth]{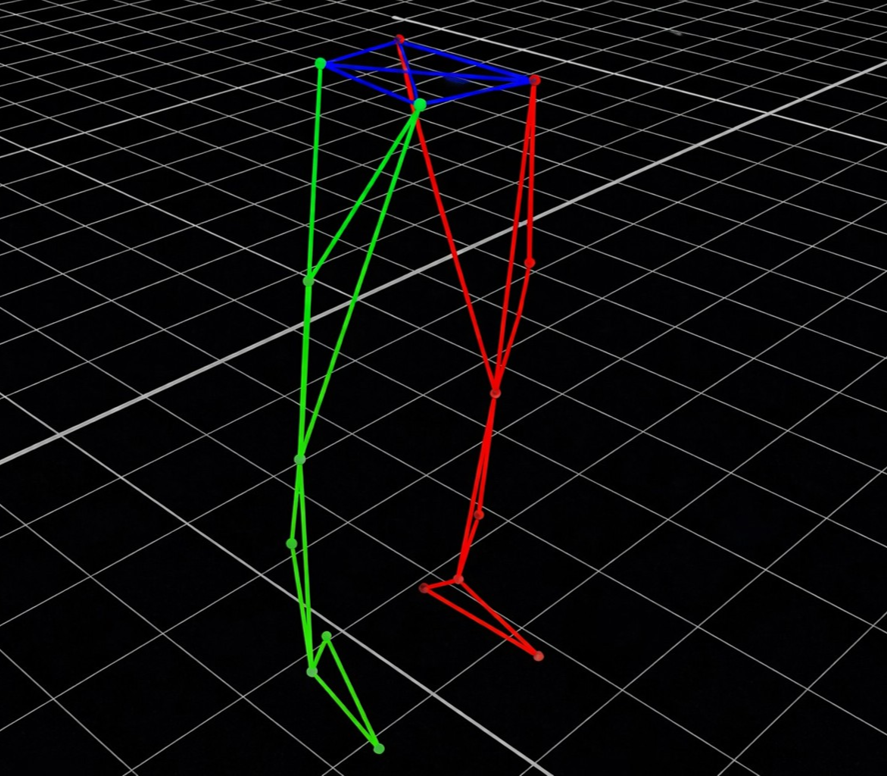}%
  }\hfill
  \subfloat[]{%
    \includegraphics[width=0.22\textwidth]{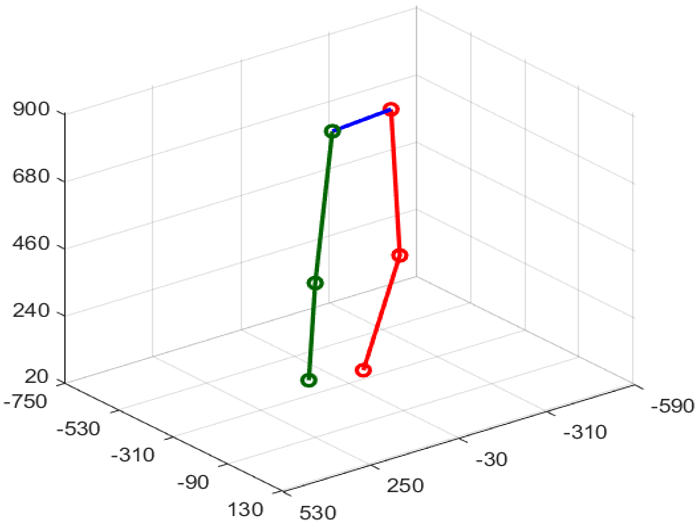}%
  }
  \caption{(a) Skeleton of lower limbs obtained from Vicon motion-capture data. (b) Simplified skeleton extracted from the Vicon motion-capture data.}
  \label{fig:vicon_skeleton}
\end{figure}
Each leg in the SLLM is projected onto a reference motion plane to filter out-of-plane noise and ensure planar consistency. The normal vector $\mathbf{n}$ of the reference motion plane is determined by averaging the instantaneous normal vectors over $T$ frames:
\begin{equation}
\mathbf{n} = \frac{1}{T}\sum_{i=1}^{T} 
\frac{(\mathbf{h}_i-\mathbf{k}_i)\times(\mathbf{a}_i-\mathbf{k}_i)}
{\left\|(\mathbf{h}_i-\mathbf{k}_i)\times(\mathbf{a}_i-\mathbf{k}_i)\right\|}
\label{eq:normal_vector}
\end{equation}
where $\mathbf{h}_i$, $\mathbf{k}_i$, and $\mathbf{a}_i$ represent the positions of the hip, knee, and ankle markers in frame $i$, respectively. The projection operator $\mathbf{P}$ is then defined as:
\begin{equation}
\mathbf{P} = \mathbf{I} - \mathbf{n}\mathbf{n}^\top.
\label{eq:projection}
\end{equation}
Desired hip and knee angles are computed from the relative orientations of the adjacent limb segments after projection. These angles serve as the reference trajectories for the suspended bipedal robot.

\subsection{Setup and Performance Evaluation}
The SDRE controller is applied to generate optimal torque profiles for test motions, including walking and squatting. The physical parameters of the robot are set as $l_1 = 0.251$\,m, $l_2 = 0.28$\,m, $m_1 = 0.876$\,kg, and $m_2 = 0.876$\,kg, with  $m_{c1} = 2.89$\,kg and $m_{c2} = 3.242$\,kg. The weighting matrices in the cost function \eqref{eq:costsdre} are chosen to prioritize angular velocity tracking:
\[
\bm{Q} = \text{diag}(10,\,10,\,100,\,100,\,1), \quad \bm{R} = \text{diag}(20,\,20), 
\]
and the weighting matrix in equation \eqref{eq:cost_param} for parameterized optimization is defined as $\bm{W} = \bm{I}_{2\times2}$. To ensure that the generated motion commands remain physically feasible, the parameter limits in \eqref{eq:parameter_limit} are configured according to the PH54-200-S500-R actuator specifications \cite{Robotis}. Specifically, the velocity limits $\omega_{\max}$ and $\omega_{\min}$ are set to $\pm50^\circ/\mathrm{s}$, and the acceleration limits $\alpha_{\max}$ and $\alpha_{\min}$ are set to $\pm1000^\circ/\mathrm{s}^2$. 
Furthermore, joint motion ranges are constrained to align with the mechanical design of the platform. The hip joint is limited to $[-50^\circ, 50^\circ]$, while the knee joint is restricted to $[-20^\circ, 75^\circ]$. These constraints prevent mechanical interference and ensure safe operation during motion reproduction. Following optimization, the command sequences $C_{\ell,\kappa}$ ($\ell \in \{H,K\}, \kappa \in \{L,R\}$) are executed, and the feedback angles for each joint are recorded. Next, for the acceleration compensation procedure, the PID parameters are set to $K_p = 10^{-2}$, $K_i = 10^{-3}$, and $K_d = 50$ and the LQR weighting matrices for computing the acceleration correction are defined as $\bm{Q} = \text{diag}(1,1,1,1)$ and $\bm{R} = \text{diag}(1,10)$. Finally, after the acceleration scaling factor $\gamma(t)$ is calculated by the offline PID-LQR controller, the modified command sequence $C_{\ell,\kappa}^m$ is obtained and executed on the bipedal platform. 
Experimental snapshots of the squatting and walking trials are presented in Fig.~\ref{fig:expSnapShot}. The corresponding joint angular trajectories are illustrated in Fig.~\ref{fig:expAnglePlot}, where the red lines denote the reference SLLM trajectories and the blue lines represent the reproduced joint angles. Additionally, the shaded blue regions depict the bounding envelope between the maximum and minimum values across repeated trials, highlighting the high repeatability of the proposed control framework.
\begin{figure}[!t]
    \centering
    \subfloat[]{%
        \includegraphics[width=0.9\linewidth]{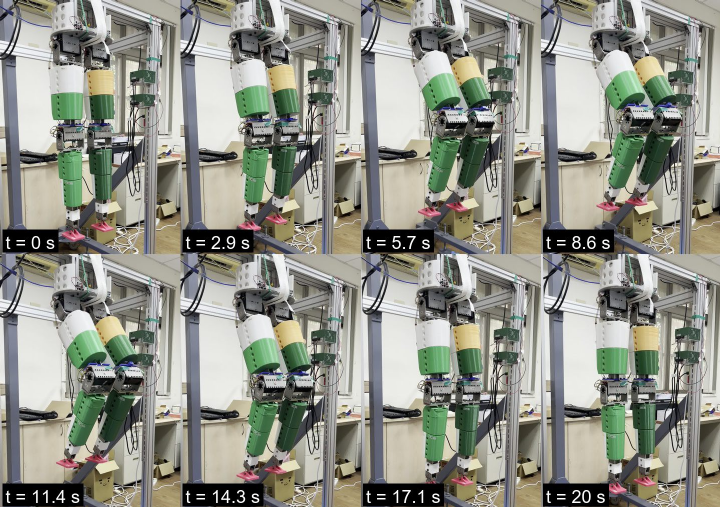}%
    }\hfill
    \subfloat[]{%
        \includegraphics[width=0.9\linewidth]{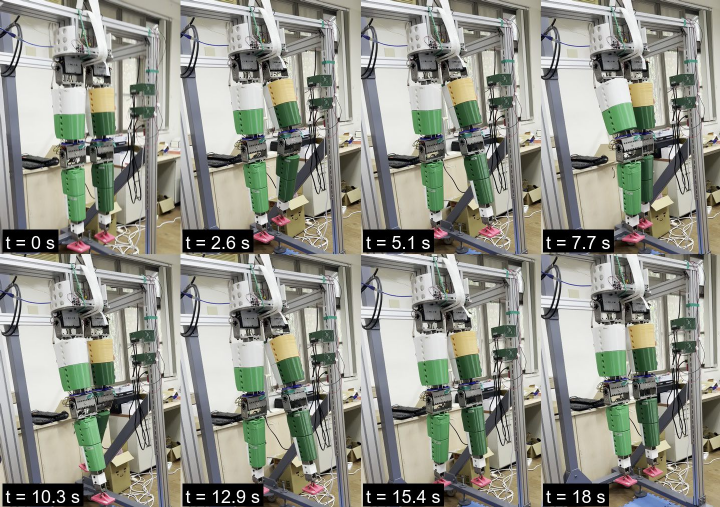}%
    }
    \caption{Snapshots of the suspended bipedal robot during experiments: (a) squatting; (b) walking.
}
    \label{fig:expSnapShot}
\end{figure}

\begin{figure*}[!t]
    \centering
    \subfloat[]{%
        \includegraphics[width=0.45\linewidth]{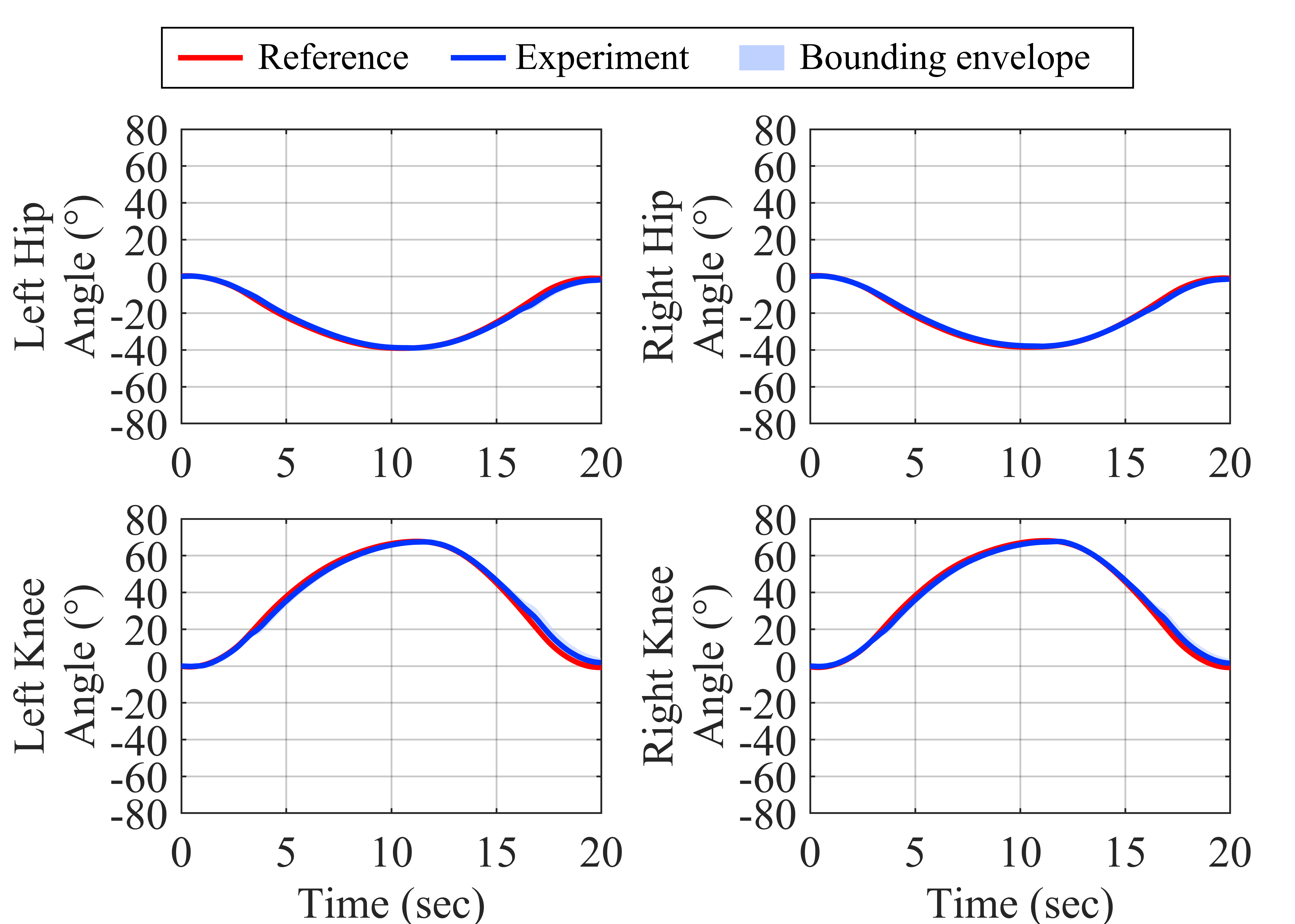}%
    }\
    \subfloat[]{%
        \includegraphics[width=0.45\linewidth]{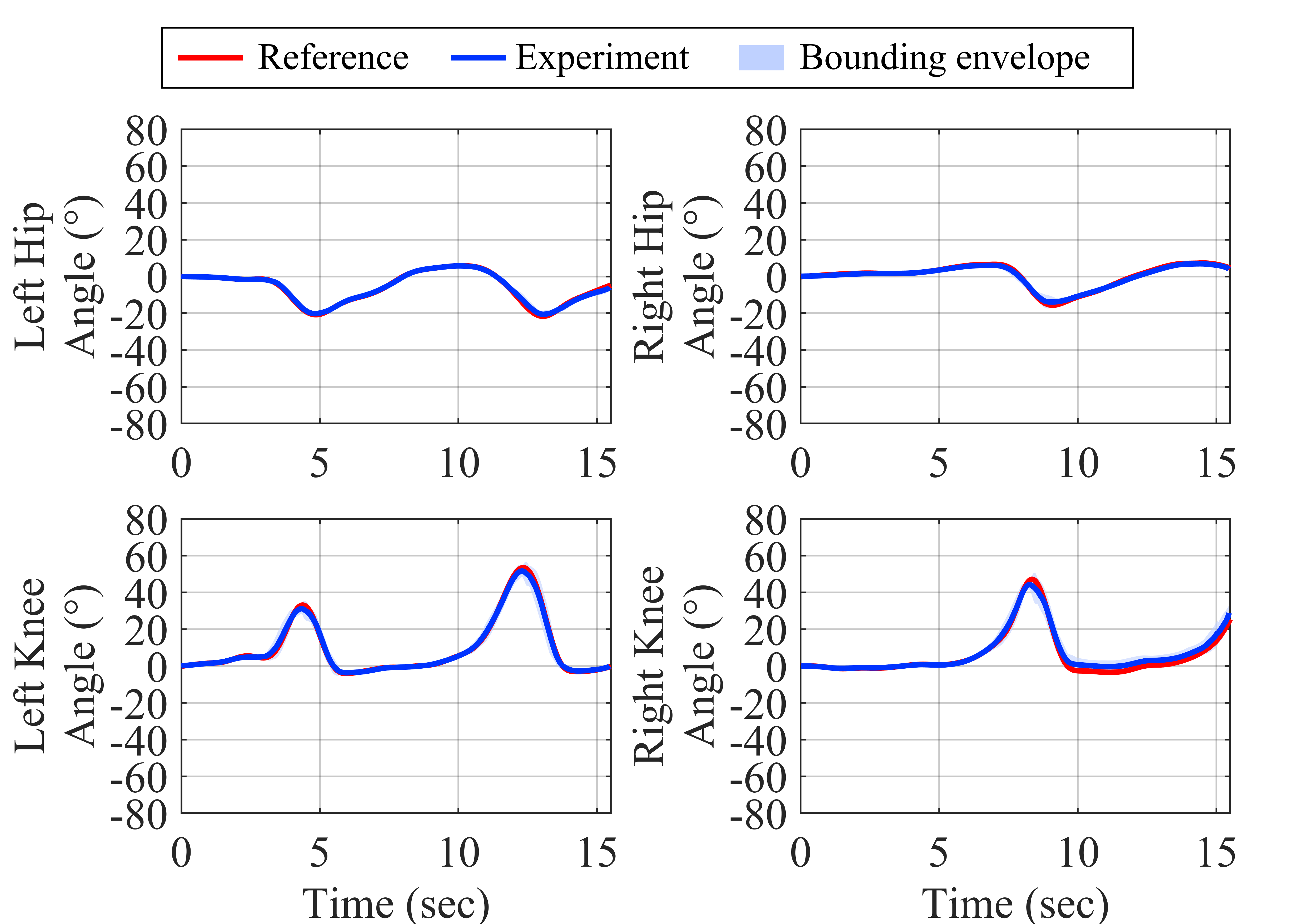}%
    }
    \caption{Comparison of reference and experimental joint angular trajectories for the suspended bipedal robot: (a) squatting; (b) walking.}
    \label{fig:expAnglePlot}
\end{figure*}
\begin{figure*}[!t]
    \centering
    \subfloat[]{%
        \includegraphics[width=0.45\linewidth]{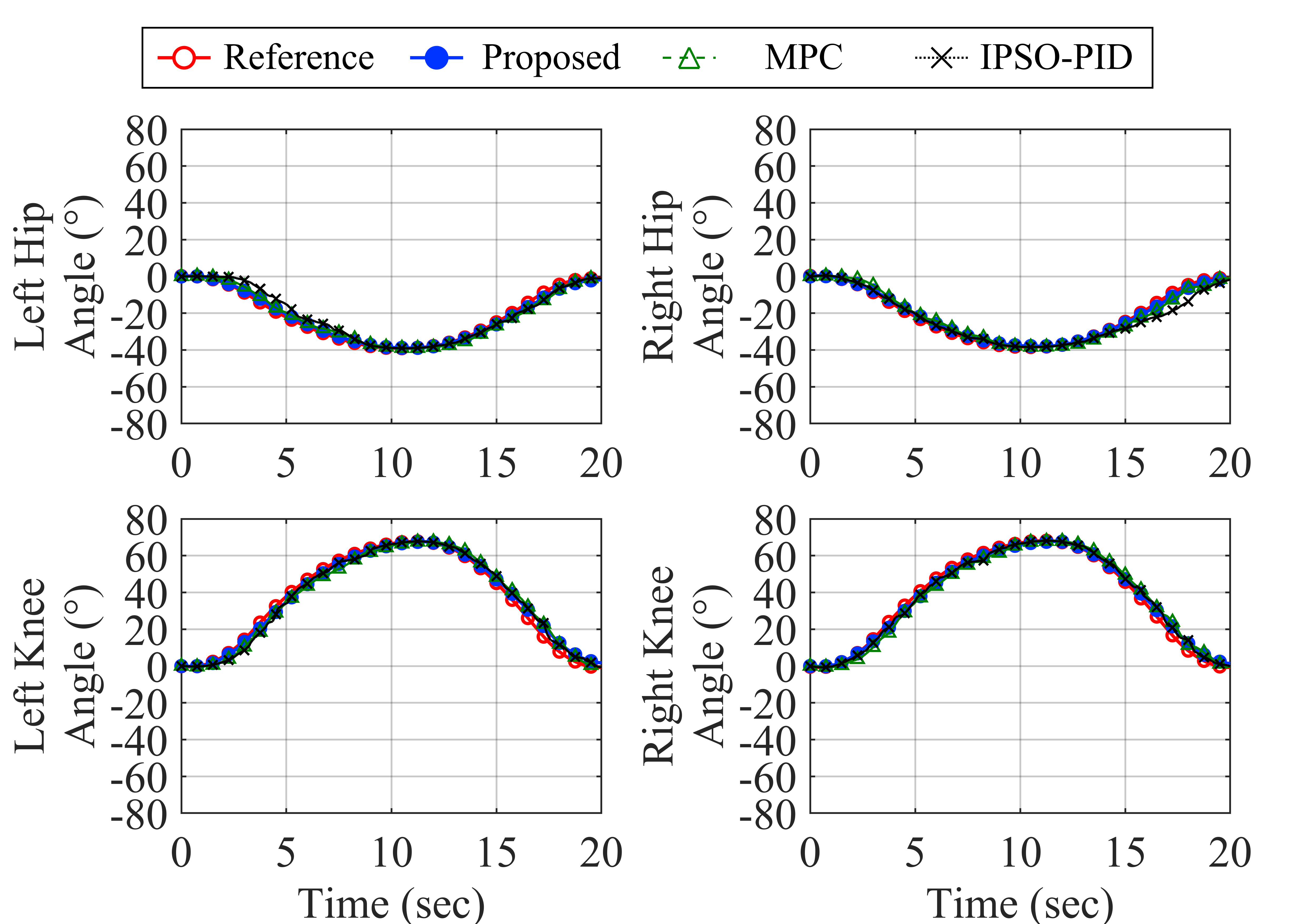}%
    }
    \subfloat[]{%
        \includegraphics[width=0.45\linewidth]{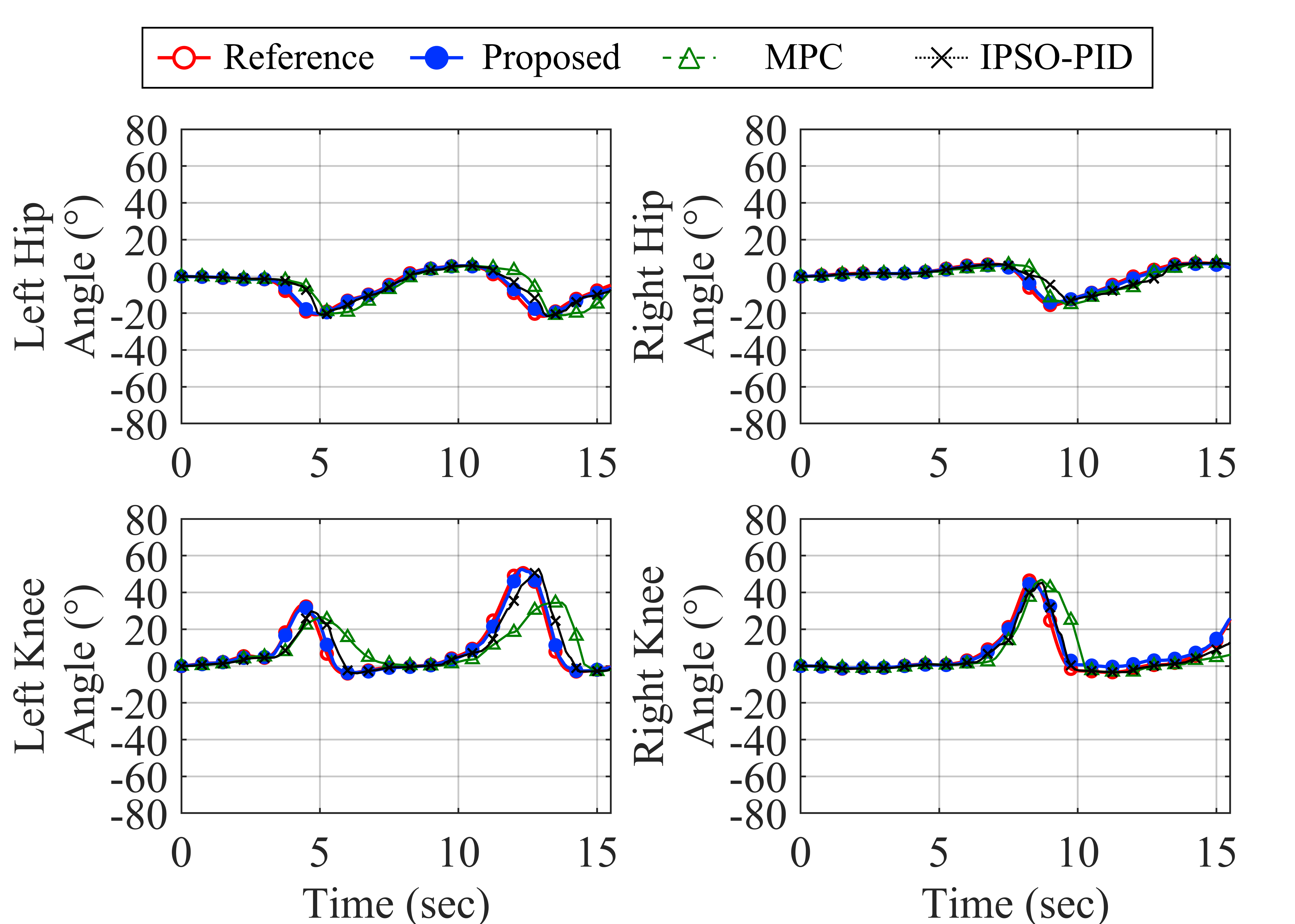}%
    }
    \caption{Experimental comparison of angular trajectories under the proposed method, MPC, and IPSO-PID for (a) squatting and (b) walking.}
    \label{fig:expAnglePlot2}
\end{figure*}
\begin{figure*}[!t]
    \centering
    \subfloat[]{%
        \includegraphics[width=0.45\linewidth]{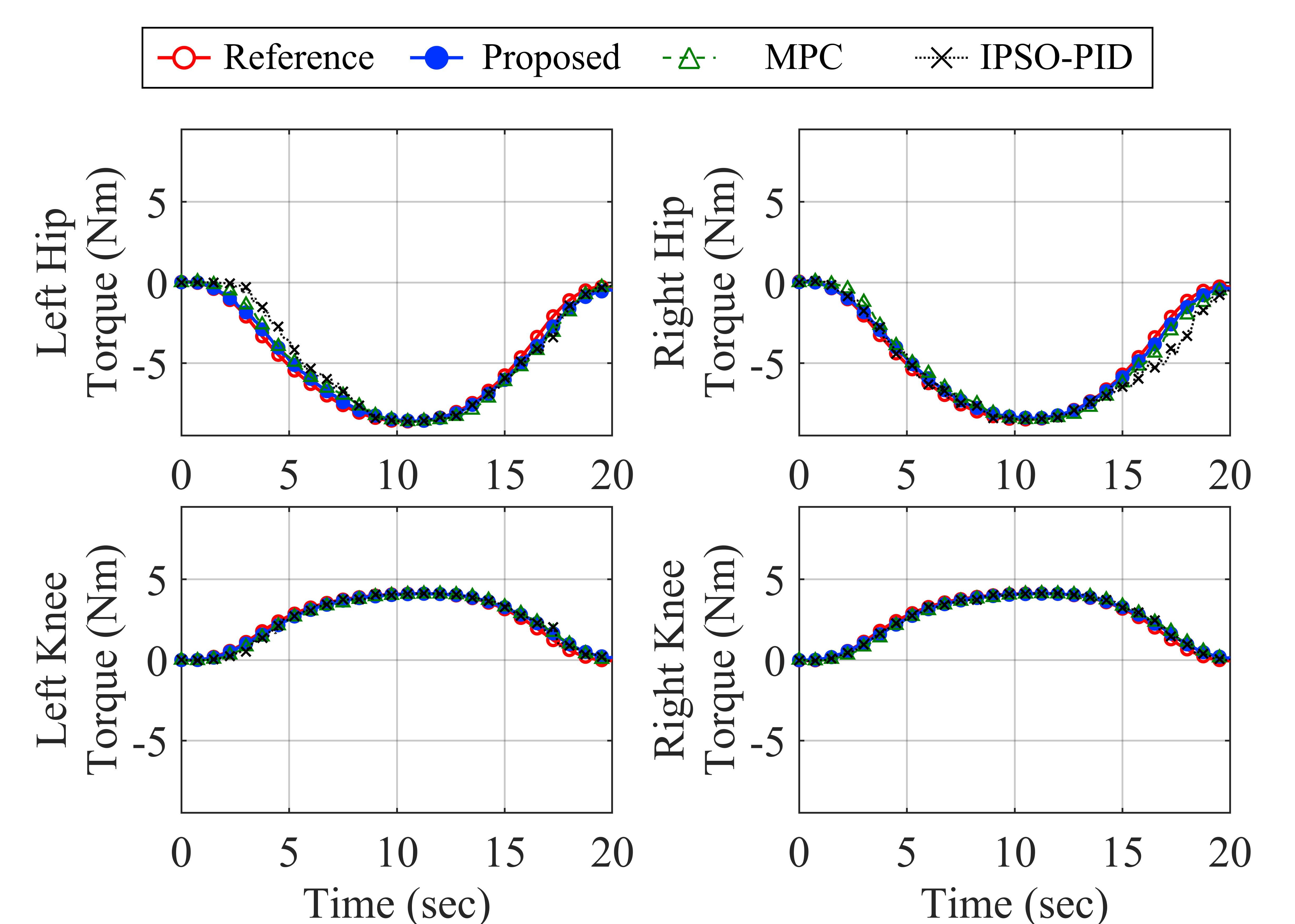}%
    }
    \subfloat[]{%
        \includegraphics[width=0.45\linewidth]{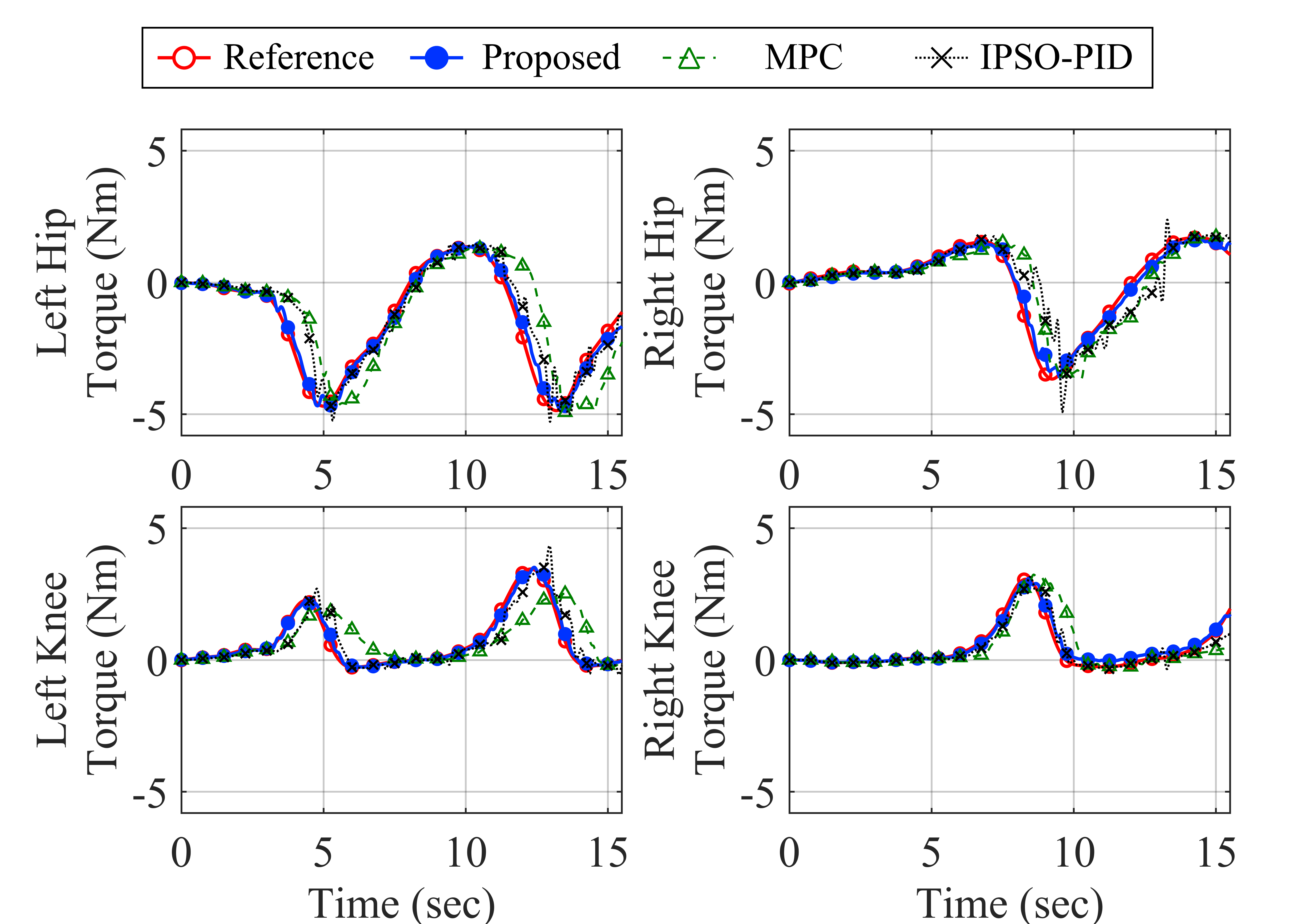}%
    }
    \caption{Experimental comparison of model-based joint torque trajectories under the reference motion, proposed method, MPC, and IPSO-PID for (a) squatting and (b) walking.}
    \label{fig:expTorquePlot}
\end{figure*}
\noindent The model-based torque trajectories in Fig.~\ref{fig:expTorquePlot} are computed from the dynamic relation in \eqref{eq:dynamics_state} based on the joint trajectories, angular velocities, and angular accelerations associated with each method. The reference curve represents the torque profile obtained by substituting the reference SLLM trajectory into the robot dynamic model. The curves of the proposed method, MPC, and IPSO-PID represent the torque profiles induced by their reproduced joint motions under the same dynamic model. Therefore, Fig.~\ref{fig:expTorquePlot} is used to examine whether each command generation method preserves the torque pattern required by the captured reference motion.
\subsection{Quantitative Analysis and Repeatability}
To evaluate the tracking accuracy and execution consistency of the proposed framework, a comparative experimental study was conducted using MPC~\cite{Chen:2018} and IPSO-PID~\cite{Liu:2021} as baseline controllers. All three methods executed the same walking and squatting reference trajectories over $M=10$ repeated trials under the same hardware and experimental conditions.
\noindent To quantify performance, three statistical metrics are evaluated for each joint: the average root mean square error ($\overline{\mathrm{RMSE}}$), the largest trial RMSE ($\mathrm{RMSE}_{\max}$), and the standard deviation of the trial RMSE values ($\mathrm{STD}$). For the $k$-th trial, the joint angle RMSE is computed as
\begin{equation}
(\mathrm{RMSE})_k
=
\sqrt{
\frac{1}{N}
\sum_{i=1}^{N}
\left(
\theta_i^k-\theta_i^{\mathrm{d}}
\right)^2
},
\end{equation}
where $\theta_i^k$ is the joint angle executed in time sample $t_i$, $\theta_i^{\mathrm{d}}$ is the desired reference angle, and $N$ is the number of samples in each trial. The average RMSE, the largest trial RMSE and the standard deviation of the trial RMSE values in the $M$ trials are defined as
\begin{equation}
\overline{\mathrm{RMSE}}
=
\frac{1}{M}
\sum_{k=1}^{M}
(\mathrm{RMSE})_k,
\end{equation}
\begin{equation}
\mathrm{RMSE}_{\max}
=
\max_{k=1,\ldots,M}
(\mathrm{RMSE})_k,
\end{equation}
\begin{equation}
\mathrm{STD}
=
\sqrt{
\frac{1}{M}
\sum_{k=1}^{M}
\left(
(\mathrm{RMSE})_k-\overline{\mathrm{RMSE}}
\right)^2
}.
\end{equation}

The results in Table~\ref{tab:table2} report $\overline{\mathrm{RMSE}}$ values of $0.8^\circ$--$2.3^\circ$ for walking and $1.0^\circ$--$2.6^\circ$ for squatting. Across all repeated trials, Table~\ref{tab:table3} shows that the largest RMSE of the trial and the STD of the RMSE values of the trial range from $0.9650^\circ$ to $2.6428^\circ$ and from $0.0346^\circ$ to $0.1454^\circ$, respectively.
% ========================================================================
\begin{table}[t]
\centering
\caption{Average joint angle RMSE between the desired and experimentally reproduced trajectories.}
\label{tab:table2}
\small
\begin{tabular}{c|c|c|c}
\hline
\textbf{Motion} &
\textbf{Side} &
\textbf{Part} &
\(\overline{\mathrm{\mathbf{RMSE}}}\) \textbf{($^\circ$)} \\
\hline
Squatting & Left  & Hip  & 1.2997 \\ \hline
Squatting & Left  & Knee & 2.5507 \\ \hline
Squatting & Right & Hip  & 1.0472 \\ \hline
Squatting & Right & Knee & 2.2470 \\ \hline
Walking   & Left  & Hip  & 1.2471 \\ \hline
Walking   & Left  & Knee & 2.0305 \\ \hline
Walking   & Right & Hip  & 0.8257 \\ \hline
Walking   & Right & Knee & 2.3186 \\
\hline
\end{tabular}
\end{table}
% ==========================================================================
\begin{table}[t]
\centering
\caption{Maximum RMSE and Standard Deviation Over 10 Trials with the Same Command.}
\label{tab:table3}
\resizebox{0.45\textwidth}{!}{
\begin{tabular}{c|c|c|c|c}
\hline
\textbf{Motion} & \textbf{Side} & \textbf{Part} & \textbf{Max RMSE}($^\circ$) & \textbf{STD}($^\circ$) \\
\hline
Squatting & Left  & Hip  & 1.3676 & 0.0433 \\ \hline
Squatting & Left  & Knee & 2.6428 & 0.0692 \\ \hline
Squatting & Right & Hip  & 1.0934 & 0.0346 \\ \hline
Squatting & Right & Knee & 2.4035 & 0.0945 \\ \hline
Walking   & Left  & Hip  & 1.2536 & 0.0416 \\ \hline
Walking   & Left  & Knee & 2.2518 & 0.1448 \\ \hline
Walking   & Right & Hip  & 0.9650 & 0.0711 \\ \hline
Walking   & Right & Knee & 2.5199 & 0.1454 \\
\hline
\end{tabular}%
}
\end{table}
\begin{table*}[t]
\centering
\caption{Comparison of Joint Angle Maximum RMSE and Standard Deviation for the Control Methods}
\label{tab:table4}
\resizebox{0.75\textwidth}{!}{
\begin{tabular}{c|c|c|ccc|ccc}
\hline
\multirow{2}{*}{\textbf{Motion}} &
\multirow{2}{*}{\textbf{Side}} &
\multirow{2}{*}{\textbf{Part}} &
\multicolumn{3}{c|}{\textbf{Max RMSE ($^\circ$)}} &
\multicolumn{3}{c}{\textbf{STD ($^\circ$)}} \\
\cline{4-9}
 & & &
\textbf{Proposed} & \textbf{MPC} & \textbf{IPSO-PID} &
\textbf{Proposed} & \textbf{MPC} & \textbf{IPSO-PID} \\
\hline
\multirow{4}{*}{Squatting}
& Left  & Hip  &
\textbf{1.3676} & 3.0536 & 5.9813 &
\textbf{0.0433} & 0.2539 & 1.0714 \\
& Left  & Knee &
\textbf{2.6428} & 4.9182 & 5.1969 &
\textbf{0.0692} & 0.4736 & 0.6155 \\
& Right & Hip  &
\textbf{1.0934} & 3.1246 & 3.4105 &
\textbf{0.0346} & 0.2932 & 0.5667 \\
& Right & Knee &
\textbf{2.4035} & 4.4887 & 3.0282 &
\textbf{0.0945} & 0.3785 & 0.3054 \\
\hline
\multirow{4}{*}{Walking}
& Left  & Hip  &
\textbf{1.2536} & 6.0112 & 4.2011 &
\textbf{0.0416} & 0.3930 & 0.5011 \\
& Left  & Knee &
\textbf{2.2518} & 12.9637 & 8.8037 &
\textbf{0.1448} & 0.6895 & 0.6912 \\
& Right & Hip  &
\textbf{0.9650} & 4.2746 & 5.0258 &
\textbf{0.0711} & 0.4152 & 0.6945 \\
& Right & Knee &
\textbf{2.5199} & 10.0332 & 8.3647 &
\textbf{0.1454} & 0.9265 & 1.4694 \\
\hline
\end{tabular}
}
\end{table*}

\begin{table*}[t]
\centering
\caption{Comparison of Joint Torque Maximum RMSE and Standard Deviation for the Control Methods}
\label{tab:table5}
\resizebox{0.75\textwidth}{!}{
\begin{tabular}{c|c|c|ccc|ccc}
\hline
\multirow{2}{*}{\textbf{Motion}} &
\multirow{2}{*}{\textbf{Side}} &
\multirow{2}{*}{\textbf{Part}} &
\multicolumn{3}{c|}{\textbf{Max RMSE (Nm)}} &
\multicolumn{3}{c}{\textbf{STD (Nm)}} \\
\cline{4-9}
 & & &
\textbf{Proposed} & \textbf{MPC} & \textbf{IPSO-PID} &
\textbf{Proposed} & \textbf{MPC} & \textbf{IPSO-PID} \\
\hline
\multirow{4}{*}{Squatting}
& Left  & Hip  &
\textbf{0.3108} & 0.6665 & 1.3364 &
\textbf{0.0080} & 0.0516 & 0.2458 \\
& Left  & Knee &
\textbf{0.1795} & 0.2951 & 0.3582 &
\textbf{0.0037} & 0.0235 & 0.0387 \\
& Right & Hip  &
\textbf{0.2451} & 0.6861 & 0.8008 &
\textbf{0.0064} & 0.0618 & 0.1331 \\
& Right & Knee &
\textbf{0.1612} & 0.2687 & 0.1817 &
\textbf{0.0046} & 0.0189 & 0.0135 \\
\hline
\multirow{4}{*}{Walking}
& Left  & Hip  &
\textbf{0.2941} & 1.3131 & 0.8904 &
\textbf{0.0093} & 0.0861 & 0.1075 \\
& Left  & Knee &
\textbf{0.1431} & 0.8652 & 0.6403 &
\textbf{0.0072} & 0.0505 & 0.0576 \\
& Right & Hip  &
\textbf{0.2234} & 0.9571 & 1.1779 &
\textbf{0.0128} & 0.0950 & 0.1635 \\
& Right & Knee &
\textbf{0.1793} & 0.6490 & 0.6132 &
\textbf{0.0079} & 0.0591 & 0.1113 \\
\hline
\end{tabular}
}
\end{table*}
As illustrated by the comparative angular trajectory in Fig.~\ref{fig:expAnglePlot2} and torque responses in Fig.~\ref{fig:expTorquePlot}, the proposed method outperforms the MPC and IPSO-PID methods. A direct performance comparison with MPC and IPSO-PID is provided in Table~\ref{tab:table4} for joint angles and Table~\ref{tab:table5} for model-based joint torques.

The results show that the proposed method achieves lower tracking errors and trial-to-trial variations across all reported joints and motion tasks:
\begin{itemize}
    \item \textbf{Kinematic Tracking and Repeatability:}
    The proposed method achieves angular $\mathrm{STD}$ values ranging from $0.0346^\circ$ to $0.1454^\circ$, compared with $0.2539^\circ$ to $0.9265^\circ$ for MPC and $0.3054^\circ$ to $1.4694^\circ$ for IPSO-PID. In terms of maximum RMSE for repeated trials, the proposed framework reduces $\mathrm{RMSE}_{\max}$ by at least $46.3\%$ relative to MPC and $20.6\%$ relative to IPSO-PID for all joints and motion tasks reported. The corresponding angular $\mathrm{STD}$ is reduced by at least $75.0\%$ relative to MPC and $69.1\%$ relative to IPSO-PID.

    \item \textbf{Dynamic Torque Preservation:}
    To evaluate whether the executed motion preserves the reference torque patterns, model-based joint torques were computed from the executed joint trajectories using \eqref{eq:dynamics_state}. As shown in Table~\ref{tab:table5}, the proposed method reduces the torque $\mathrm{RMSE}_{\max}$ by at least $39.2\%$ relative to MPC and $11.3\%$ relative to IPSO-PID. The corresponding torque $\mathrm{STD}$ is reduced by at least $75.7\%$ and $65.9\%$, respectively. These results indicate that the model-based torque profiles obtained from the proposed method more closely reproduce the reference torque patterns than those obtained from MPC and IPSO-PID.
\end{itemize}

\noindent 
Although online control methods are capable of adjusting control inputs in real time to respond to disturbances, their performance is inherently influenced by real-time feedback quality and solver behavior. In the context of the suspended bipedal robot platform, the primary objective is to reproduce predefined motion trajectories with high consistency over multiple repetitions, rather than to react to unpredictable disturbances. As a result, variations introduced by real-time feedback processing may lead to trial-to-trial discrepancies. In contrast, the proposed offline control strategy eliminates this dependency by fixing the control commands prior to execution. This makes our method more suitable for experimental scenarios where repeatability is critical, such as benchmarking and systematic evaluation of motion reproduction performance.
\section{Conclusion}
This study presented a suspended bipedal robotic platform designed to serve as a controlled, repeatable environment for human motion reproduction in exoskeleton-related research. To overcome physical hardware constraints and unmodeled dynamics, a hybrid three-stage offline control strategy was developed. The framework integrates an SDRE-based torque profile as a model-based dynamic reference, a parameterized piecewise-linear velocity model to enforce motor speed and acceleration limits, and an offline PID-LQR acceleration refinement scheme to mitigate tracking discrepancies. 
Experimental evaluations across walking and squatting tasks demonstrated that the proposed three-stage controller significantly outperforms benchmark controllers, including MPC and IPSO-PID. The reproduced joint motions achieved high trajectory fidelity with average angular RMSE values between $0.8^\circ$ and $2.6^\circ$ and trial-to-trial standard deviations below $0.15^\circ$. Relative to the baseline controllers, the proposed method reduced maximum joint angle RMSE and standard deviation by at least $20.6\%$ and $69.1\%$, respectively, while reducing model-based torque RMSE and standard deviation by at least $11.3\%$ and $65.9\%$.
These quantitative improvements indicate that the optimized command sequence not only enhances joint angle reproduction but also better preserves the model-based torque pattern required by the captured reference motion. The proposed three-stage control thereby enables accurate, repeatable, and dynamically consistent motion reproduction. Our results also highlight the strategic advantage of the offline control approach for this specific platform. While reactive feedback controllers adjust inputs online to reject instantaneous disturbances, real-time feedback processing introduces solver latencies, noise sensitivity, and non-deterministic variations across executions.
The offline strategy is therefore more suitable for preserving trajectory accuracy and trial repeatability under identical experimental conditions.

Consequently, this work provides a low-risk, controlled experimental framework for the early-stage evaluation of knee-type exoskeletons without requiring direct human participation. At this stage, the system is positioned as a foundational motion reproduction platform. 
Although the current validation focuses on the standalone bipedal robot reproducing motion gaits captured by the Vicon tracking system and excludes the interaction force between the foot and the ground, the proposed control framework is designed for extensibility. Future work will focus on extending the system model to incorporate joint friction, ground interaction forces, and hardware coupling effects when physical exoskeleton units are mounted on the robot. Because the core three-stage control architecture, specifically the SDRE reference generation and PID-LQR compensation, is modular, it remains directly applicable to these expanded dynamic models through systematic parameter updates.\\

%%%%%%%%%%%%%%%%%%%%%%%%%%%%%%%%%%%%%%%%
\section*{Acknowledgments} 
The corresponding author would also like to thank Yen-Rong Huang and Yun-Shan Chen for their assistance with the experiments.
%%%%%%%%%%%%%%%%%%%%%%%%%%%%%%%%%%%%%%%%

\end{document}